\documentclass[11pt]{article}

\usepackage[margin=1in]{geometry}
\usepackage{mathtools}
\mathtoolsset{showonlyrefs=true}
\usepackage{amssymb,amsfonts,amsthm}
\usepackage{xcolor}
\usepackage{graphicx}
\usepackage{epstopdf}
\usepackage{algorithmic}
\usepackage{booktabs}
\usepackage{multirow}
\usepackage{tabularx}
\usepackage{array}
\usepackage{tikz}
\usepackage{subcaption}
\usepackage{float}
\usepackage[hidelinks]{hyperref}
\usepackage[nameinlink,capitalize,noabbrev]{cleveref}

\usetikzlibrary{positioning, calc, decorations.pathreplacing}
\DeclareGraphicsExtensions{.pdf,.png,.jpg,.eps}

\newcolumntype{Y}{>{\centering\arraybackslash}X}

\numberwithin{equation}{section}

\newtheorem{theorem}{Theorem}[section]
\newtheorem{lemma}{Lemma}[section]

\theoremstyle{definition}
\newtheorem{definition}{Definition}[section]

\theoremstyle{remark}

\crefname{hypothesis}{Hypothesis}{Hypotheses}
\crefname{fact}{Fact}{Facts}
\newcommand{\E}{\sigma^{+}_{\theta}}
\newcommand{\D}{\sigma_{\theta}}

\hypersetup{
  pdftitle={Learning symplectic model reduction based on an approximation theorem of symplectic embeddings},
  pdfauthor={Liyi Feng, Yifa Tang, Yulin Xie, Ruili Zhang, and Aiqing Zhu}
}

\title{Learning symplectic model reduction based on an approximation theorem of symplectic embeddings}
\author{\begin{minipage}{0.95\textwidth}
\centering
Liyi Feng$^{1}$, Yifa Tang$^{2}$, Yulin Xie$^{2}$, Ruili Zhang$^{1,*}$, and Aiqing Zhu$^{3,*}$\\[0.5em]
\small $^{1}$School of Mathematics and Statistics, Beijing Jiaotong University, Beijing 100044, China\\
\small $^{2}$State Key Laboratory of Mathematical Sciences, Academy of Mathematics and Systems Science, Chinese Academy of Sciences, Beijing 100190, China\\
\small $^{3}$Department of Mathematics, National University of Singapore, Singapore 119076\\
\small $^{*}$Corresponding authors: \texttt{zhangrl@bjtu.edu.cn}, \texttt{zaq@nus.edu.sg}
\end{minipage}}
\date{}

\begin{document}

\maketitle

\begin{abstract}
High–dimensional Hamiltonian systems play a central role in many scientific and engineering disciplines, with dynamics that evolve on symplectic manifolds. Although deep learning provides powerful tools for constructing its low-dimensional surrogates from data, the intrinsic symplectic structure is easily destroyed during model reduction. As a result, a standard autoencoder may produce latent coordinates that do not support a Hamiltonian flow, leading to unstable long-time prediction. In this paper, we first establish a universal approximation theorem for symplectic embeddings. And based on the theory, we propose symplecticity-preserving autoencoders (SpAE), in which the decoder is parameterized as a symplectic embedding and the encoder is constructed as the corresponding symplectic projection. This architecture is expressive enough to approximate nonlinear symplectic embeddings and the corresponding symplectic projection, preserves the symplectic structure exactly by construction, and can be trained by standard unconstrained optimization, thereby improving both reconstruction and prediction accuracy. Extensive experiments on high–dimensional lattice and particle systems demonstrate the effectiveness of the proposed method.
\end{abstract}

\begin{center}
\begin{minipage}{0.85\textwidth}
\small
\textbf{Keywords.}
Hamiltonian systems, nonlinear dimension reduction, symplectic model reduction,
autoencoders, symplectic embeddings

\smallskip
\textbf{MSC codes.}
65P10, 34C20, 37M15, 37J25, 68T07, 93A15, 37N30
\end{minipage}
\end{center}

\section{Introduction}
High–dimensional data frequently arise in scientific computing, physical simulations, and modern artificial intelligence. A central objective is to obtain low–dimensional representations that preserve essential structure while enabling efficient learning and prediction~\cite{maaten2008visualizing,mcinnes2018umap}. Recent progress in deep representation learning has led to powerful nonlinear dimensionality-reduction techniques based on autoencoders~\cite{hinton2006reducing,vincent2008extracting,wang2016auto}, contrastive learning~\cite{abid2019contrastive,hadsell2006dimensionality,hawke2024contrastive}, and diffusion models~\cite{preechakul2022diffusion,wu2024factorized}. These approaches have demonstrated remarkable success in capturing latent manifolds of images, text, and other unstructured data. However, when the data originate from dynamical systems, particularly those governed by strict physical laws, standard dimensionality-reduction methods face intrinsic limitations: they typically optimize reconstruction accuracy or statistical fidelity, but pay limited attention to the underlying geometric or physical structure that drives the evolution of the data. As a consequence, the learned representations may fail to support long-horizon prediction or physically consistent modeling~\cite{brantner2023symplectic,peng2016symplectic,  zhu2025continuity}.

Hamiltonian systems provide a prominent and challenging class of examples. Their trajectories evolve on symplectic manifolds, satisfying exact conservation of volume and energy along the flow~\cite{Arnold1989}. For such systems, the preservation of symplectic structure is essential for qualitative fidelity~\cite{feng1985proceedings,feng1986difference,feng1995collected,feng2010symplectic}. In particular, if a dimensionality-reduction map does not preserve the symplectic form, then the reduced variables cannot correspond to any Hamiltonian flow, and the predictive model built in the reduced space may exhibit drift, instability, or nonphysical behavior when lifted back to the original space~\cite{hairer2006geometric,reich1999backward}. Existing linear symplectic reduction methods, often collectively formulated as proper symplectic decomposition (PSD), are effective in moderate dimensions, and have been successfully employed in several model-reduction frameworks~\cite{bajars2025structure,brantner2023symplectic,buchfink2023symplectic,peng2016symplectic,sharma2023symplectic,yildiz2025symplectic}.

However, these inherently linear methods can become inefficient when the solution manifold is intrinsically nonlinear, often requiring a large number of modes to achieve acceptable accuracy; see, e.g.,~\cite{blickhan2024registration,brantner2023symplectic,lee2020model} and also our numerical results below. This limitation naturally motivates the use of nonlinear reduced representations. In particular, autoencoder-based approaches provide a flexible framework for learning nonlinear trial manifolds directly from data~\cite{chen2024constructing, zhu2025continuity, zhu2025identifiable}. For Hamiltonian systems, the key additional requirement is that the reduction preserve the underlying symplectic structure. Existing works have begun to address this issue from different perspectives. Buchfink et al.~\cite{buchfink2023symplectic} proposed a nonlinear symplectic model reduction framework based on symplectic manifold Galerkin projection, using a symplecticity penalty to encourage the decoder to define an approximately symplectic trial manifold. More recently, Brantner and Kraus~\cite{brantner2023symplectic} introduced a symplectic autoencoder architecture that enforces symplecticity by construction through a combination of nonlinear symplectic network blocks and PSD-like linear layers. Nevertheless, the latter approach still relies on manifold-constrained optimization in training.

In this work, we first develop an approximation theory for general symplectic embeddings.
While existing approximation results have treated specific symplectic embeddings \cite{chen2025reduced}, our analysis covers the general case.
Based on our theory, we then construct symplecticity-preserving autoencoders (SpAE) that learn a Hamiltonian reduction map from the original phase space to a low-dimensional latent space while preserving the symplecticity form intrinsically.
Unlike approaches that promote symplecticity through post hoc projection or penalty terms, our architecture guarantees exact symplecticity by construction. Consequently, the proposed SpAE is fully compatible with standard autoencoder training pipelines.
Several numerical experiments demonstrate the effectiveness of this approach. Across a range of high-dimensional Hamiltonian systems, the proposed SpAE yields consistently improved reconstruction accuracy. Moreover, by learning the latent dynamics with Hamiltonian neural networks in the reduced space, the resulting reduced models achieve substantially improved long-time trajectory prediction.

The remainder of this paper is organized as follows. \Cref{sec:embeddings} defines symplectic embeddings and projections, shows that they reduce the dynamics to canonical Hamiltonian form, and then establishes the approximation theory. \Cref{sec:architecture} uses this theory to build the symplecticity-preserving encoder, decoder, and autoencoder, together with the training objective, and \Cref{sec:proofs} provides the proofs of the approximation results. \Cref{sec:experiments} validates SpAE on a crystal-lattice model, charged particles in a tokamak field, and a two-stream plasma instability, and \Cref{sec:conclusion} concludes the paper.

\section{Approximation of symplectic embeddings}\label{sec:embeddings}
Given snapshot data generated by a Hamiltonian system on the phase space $\mathbb{R}^{2n}$, our objective is to learn a nonlinear low-dimensional representation of its solution manifold in a latent phase space $\mathbb{R}^{2k}$, where $k\ll n$, and to learn the corresponding latent dynamics.
To this end, we seek an encoder--decoder pair
\[
\E:\mathbb{R}^{2n}\to\mathbb{R}^{2k},
\qquad
\D:\mathbb{R}^{2k}\to\mathbb{R}^{2n},
\]
such that $\D\circ \E$ accurately reconstructs the solution manifold while, at the same time, the decoder $\D$ defines a symplectic embedding from the latent phase space into the original phase space and the encoder $\E$ defines the corresponding symplectic projection. In this section, we first show that this requirement ensures that the reduced dynamics in the latent space admits a Hamiltonian formulation. We then establish an approximation theorem for symplectic embeddings.

\subsection{Hamiltonian system, Symplectic embedding and projection}
We first present some of the notations and definitions
introduced throughout this paper.
\begin{definition}[Symplectic matrix]
A matrix $B \in \mathbb{R}^{2d \times 2d}$ is called  symplectic if it satisfies $B^\top J_{2d} B = J_{2d}$. The set of all such matrices, denoted by
\begin{equation*}
\mathrm{Sp}(2d, \mathbb{R}) = \{ B \in \mathbb{R}^{2d \times 2d} \mid B^\top J_{2d} B = J_{2d} \},
\end{equation*}
is called the {matrix symplectic group}.
\end{definition}
\begin{definition}[Symplectic map]
For an open set $U \subset \mathbb{R}^{2d}$, a differentiable map $\Phi: U \rightarrow \mathbb{R}^{2d}$ is called symplectic if its Jacobian matrix is symplectic for every $y\in U$. Namely, \begin{equation}
\left(D \Phi(y) \right)^\top J_{2d}
\left(D \Phi(y)\right)=J_{2d},\quad \forall y \in U,
\end{equation}
where $D \Phi(y)\in\mathbb{R}^{2d\times 2d}$ denotes the Jacobian matrix of $\Phi$ at $y$.
\end{definition}

Consider a Hamiltonian system on the phase space $\mathbb{R}^{2d}$,
\begin{equation}\label{eq:hamiltonian}
    \dot{y}(t)=J_{2d}\nabla H(y(t)),\qquad  J_{2d}=
\begin{pmatrix}
0 & I_d\\
-I_d & 0
\end{pmatrix},
\end{equation}
where $H:\mathbb{R}^{2d}\to\mathbb{R}$ is the Hamiltonian and $I_d$ is the $d\times d$ identity matrix.
A classical result is that the phase flow of a Hamiltonian system is a symplectic map; see, for example,~\cite[Theorem~2.4]{hairer2006geometric}.
This symplectic structure is closely tied to the long-time stability of Hamiltonian dynamics and is therefore highly valued in numerical computation \cite{channell1990symplectic,feng1985proceedings,feng1986difference,feng1995collected,feng2010symplectic,forest1990fourth,hairer2006geometric,sanz1988runge,sanz2018numerical}, which in turn motivates our focus on symplectic reduction.

\begin{definition}[Symplectic embedding~\cite{cristofaro2018symplectic}]
\label{def:symplectic_embedding}
A smooth embedding $\sigma:\mathbb{R}^{2k}\hookrightarrow\mathbb{R}^{2n}$ (with $k < n$) is
a \emph{symplectic embedding} if its Jacobian satisfies
\begin{equation}
    (D\sigma(p))^\top J_{2n}\,D\sigma(p)=J_{2k},\quad \text{for all }p\in\mathbb{R}^{2k},
    \label{eq:symp-embed}
\end{equation}
where $D\sigma(p)\in\mathbb{R}^{2n\times 2k}$ denotes the Jacobian of $\sigma$ at $p$.
\end{definition}

Then we give the general definition of the symplectic projection as follows.
\begin{definition}[Symplectic projection]
\label{def:sym_projection}
Let $\sigma:\mathbb{R}^{2k}\to\mathbb{R}^{2n}$ be a symplectic embedding, and let
$\mathcal M:=\sigma(\mathbb{R}^{2k})$.
A smooth map $\sigma^+:U\to\mathbb{R}^{2k}$, defined on a neighborhood $U$ of $\mathcal M$,
is called an \emph{associated symplectic projection} of $\sigma$ if
\begin{equation*}
    \sigma^+\circ \sigma = \mathrm{Id}_{\mathbb{R}^{2k}},\quad \text{and} \quad
    J_{2k} D\sigma^+(\sigma(p))=\,(D\sigma(p))^\top J_{2n} ,
    \qquad \forall p\in\mathbb{R}^{2k}.
\end{equation*}
\end{definition}
It is easy to verify that, in the linear setting, our definition is equivalent to that in~\cite{peng2016structure}.

We now explain how symplectic embeddings and their projections interact with Hamiltonian reduction. Let \(\sigma:\mathbb{R}^{2k}\to\mathbb{R}^{2n}\) be a symplectic embedding with image \(\mathcal M=\operatorname{Im}(\sigma)\subset\mathbb{R}^{2n}\), a \(2k\)-dimensional symplectic submanifold parameterized by reduced coordinates \(z\in\mathbb{R}^{2k}\). When the Hamiltonian dynamics in \(\mathbb{R}^{2n}\) are restricted to \(\mathcal M\), the reduced coordinates \(z\) satisfy a reduced Hamiltonian system, as shown in the following theorem.

\begin{lemma}
Consider the canonical Hamiltonian system $\dot{x} = J_{2n}\nabla H(x)$, where $x \in \mathbb{R}^{2n}$, and
$H: \mathbb{R}^{2n} \to \mathbb{R}$ is the Hamiltonian function. Let $\sigma: \mathbb{R}^{2k} \to \mathbb{R}^{2n}$ be a smooth symplectic embedding. Suppose that $x(t)$ is a solution and $x(t) \in \operatorname{Im}(\sigma)$ for all $t$ in the time interval $I$. Then there exists a curve $z : I \to \mathbb{R}^{2k}$ such that $x(t) = \sigma(z(t))$, and $z(t)$ satisfies
\begin{equation*}
  \dot{z} = J_{2k}\,\nabla H_r(z),\quad  H_r(z) := H(\sigma(z)),
  \quad z \in \mathbb{R}^{2k}.
\end{equation*}
\end{lemma}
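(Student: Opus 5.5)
The plan is to recover the reduced curve by inverting $\sigma$ along the trajectory, and then to transfer the Hamiltonian equation to the latent space using the symplecticity relation \eqref{eq:symp-embed}.

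\textbf{Step 1: defining $z$ and showing it is $C^1$.} Since $\sigma$ is a smooth embedding, it is injective, it is a homeomorphism onto its image, and $D\sigma(p)$ has full rank $2k$ for every $p$. Full rank also follows directly from \eqref{eq:symp-embed}: if $D\sigma(p)v=0$, then $J_{2k}v=0$, so $v=0$. Because $x(t)\in\operatorname{Im}(\sigma)$ for all $t\in I$, injectivity lets me define $z(t):=\sigma^{-1}(x(t))$ uniquely. Continuity of $z$ follows from $\sigma$ being a homeomorphism onto its image.

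I expect differentiability of $z$ to be the main (though standard) obstacle. My approach is as follows.
\begin{itemize}
\item Fix $t_0\in I$ and set $p_0=z(t_0)$.
\item By the immersion theorem, after a linear change of coordinates in $\mathbb{R}^{2n}$ there is a projection $\pi:\mathbb{R}^{2n}\to\mathbb{R}^{2k}$ onto $2k$ suitable coordinates such that $D(\pi\circ\sigma)(p_0)$ is invertible.
\item By the inverse function theorem, $\pi\circ\sigma$ is a diffeomorphism from a neighborhood $V$ of $p_0$ onto its image. Hence $\psi:=(\pi\circ\sigma|_V)^{-1}\circ\pi$ is a smooth left inverse of $\sigma$ near $\sigma(p_0)$.
\item Since $\sigma$ is an embedding, $\sigma(V)$ is relatively open in $\operatorname{Im}(\sigma)$. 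By continuity of $z$, we have $z(t)\in V$ for $t$ near $t_0$.
\item On that interval, $z(t)=\psi(x(t))$. This is a composition of a smooth map with the $C^1$ solution $x$, so $z$ is $C^1$ near $t_0$.
\end{itemize}
Since $t_0$ was arbitrary, $z$ is $C^1$ on all of $I$.

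\textbf{Step 2: the computation.} Differentiating $x(t)=\sigma(z(t))$ and using the equation of motion gives
\[
D\sigma(z)\,\dot z=\dot x=J_{2n}\nabla H(\sigma(z)).
\]
I left-multiply by $(D\sigma(z))^\top J_{2n}$. On the left, \eqref{eq:symp-embed} turns this into $J_{2k}\dot z$. On the right, $J_{2n}^2=-I_{2n}$ together with the chain rule gives
\[
-(D\sigma(z))^\top\nabla H(\sigma(z))=-\nabla H_r(z).
\]
Hence $J_{2k}\dot z=-\nabla H_r(z)$. Applying $J_{2k}^{-1}=-J_{2k}$ yields $\dot z=J_{2k}\nabla H_r(z)$, which is the claimed reduced system.

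One point is worth noting. Multiplying by $(D\sigma)^\top J_{2n}$ loses no information, because we already know $\dot x$ lies in the range of $D\sigma$; this is exactly where the hypothesis $x(t)\in\operatorname{Im}(\sigma)$ is used. Equivalently, the same step is application of $J_{2k}D\sigma^+$ from \Cref{def:sym_projection}, so the reduced vector field is $D\sigma^+(x)\,J_{2n}\nabla H(x)$ evaluated along $\mathcal M$.
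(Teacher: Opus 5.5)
Your proof is correct and follows the paper's argument: you differentiate $x=\sigma(z)$, left-multiply by $(D\sigma)^\top J_{2n}$, and then use the symplecticity relation and the chain rule to obtain $J_{2k}\dot z=-\nabla H_r(z)$. Your Step 1, which shows via a local smooth left inverse that $z=\sigma^{-1}\circ x$ is well defined and $C^1$, fills in a regularity point that the paper takes for granted.
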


\begin{proof}
Since $x(t) \in \operatorname{Im}(\sigma)$ and $\sigma$ is a symplectic embedding, we write $x(t) = \sigma(z(t))$. Substituting the time derivative $\dot{x} = D\sigma(z)\dot{z}$ into the original system $\dot{x} = J_{2n}\nabla H(x)$ yields
\begin{equation*}
        D\sigma(z)\,\dot{z} = J_{2n}\,\nabla H(\sigma(z)).
\end{equation*}
Since $(D\sigma(p))^\top J_{2n}\,D\sigma(p)=J_{2k}$ and the chain rule $\nabla H_r(z) = D\sigma(z)^\top \nabla H(\sigma(z))$, the expression simplifies to $J_{2k}\dot{z} = -\nabla H_r(z)$. Left-multiplying by $J_{2k}^{-1} = -J_{2k}$ yields the canonical reduced Hamiltonian system.
\end{proof}

When the dynamics do not lie exactly on \(\mathcal M\), the following lemma shows that the reduced dynamics still approximately preserves the Hamiltonian structure, provided that \(\sigma\) is a symplectic embedding and \(\sigma^{+}\) is its associated symplectic projection.

\begin{lemma}
\label{lemma:Ham qpproach}
Let $\Omega\subset \mathbb{R}^{2n}$ be compact and
$H\in C^2(\mathbb{R}^{2n})$. Let $\sigma$ be a symplectic embedding and $\sigma^{+}$ be its associated symplectic projection. Both $\sigma$ and $\sigma^{+}$ are assumed to be smooth.
Assume that
\[
\sup_{x\in\Omega}|x-\sigma\circ \sigma^{+}(x)|\le \varepsilon .
\]
Let $x(t):[0,T]\to \Omega$ solve
\(
\dot x=J_{2n}\nabla H(x),
\)
and let $z(t):[0,T]\to \mathbb{R}^{2k}$ solve
\(
\dot z=J_{2k}\nabla(H\circ \sigma)(z),\ z(0)=\sigma^{+}(x(0)).
\)
Then there exists a constant $C_T>0$, independent of $\varepsilon$, such that
\[
\sup_{t\in[0,T]}|z(t)-\sigma^{+}(x(t))|\le C_T\varepsilon .
\]
\end{lemma}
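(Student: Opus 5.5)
Set $w(t):=\sigma^{+}(x(t))$ and compare it with $z(t)$ by a Gronwall argument. The plan is to show that $w$ satisfies the reduced Hamiltonian equation up to a defect of size $O(\varepsilon)$, and then propagate this over $[0,T]$.

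\textbf{Equation for $w$.} Differentiating gives
\[
\dot w = D\sigma^{+}(x)\,J_{2n}\nabla H(x).
\]
We want to compare this with $J_{2k}\nabla(H\circ\sigma)(w) = J_{2k}D\sigma(w)^\top\nabla H(\sigma(w))$. The defining identity of the symplectic projection is $J_{2k}D\sigma^{+}(\sigma(p)) = D\sigma(p)^\top J_{2n}$. Multiplying on the left by $J_{2k}^{-1}=-J_{2k}$ and on the right by $J_{2n}$ (using $J_{2n}^2=-I$) yields
\[
D\sigma^{+}(\sigma(p))\,J_{2n} = J_{2k}\,D\sigma(p)^\top.
\]
Evaluating at $p=w$ gives exactly
\[
D\sigma^{+}(\sigma(w))\,J_{2n}\nabla H(\sigma(w)) = J_{2k}\nabla(H\circ\sigma)(w).
\]
So if $x$ were equal to $\sigma(w)$, then $w$ would satisfy the reduced equation exactly.

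\textbf{Defect estimate.} Set $g(y):=D\sigma^{+}(y)J_{2n}\nabla H(y)$. Then
\[
\dot w - J_{2k}\nabla(H\circ\sigma)(w) = g(x)-g(\sigma(\sigma^{+}(x))).
\]
Both $x$ and $\sigma\circ\sigma^{+}(x)$ lie in the compact set
\[
K:=\{y:\operatorname{dist}(y,\Omega)\le\varepsilon_0\},
\]
for $\varepsilon\le\varepsilon_0$; for larger $\varepsilon$ the estimate is trivial after enlarging $C_T$, since everything stays bounded. We assume $K$ lies inside the domain $U$ of $\sigma^{+}$. Because $\sigma^{+}$ is smooth and $H\in C^2$, $g$ is $C^1$ and hence Lipschitz on $K$ (on its convex hull, or by covering $K$ with finitely many balls), with some constant $L_g$. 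This gives
\[
|\dot w - J_{2k}\nabla(H\circ\sigma)(w)|\le L_g\varepsilon.
\]

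\textbf{Gronwall.} Let $e=z-w$, so $e(0)=0$. Then
\[
|\dot e|\le |J_{2k}\nabla(H\circ\sigma)(z)-J_{2k}\nabla(H\circ\sigma)(w)| + L_g\varepsilon.
\]
The vector field $F=J_{2k}\nabla(H\circ\sigma)$ is $C^1$, hence Lipschitz with constant $L_F$ on a compact neighborhood $V$ of $\sigma^{+}(\Omega)$, say its closed $1$-neighborhood. While $z$ stays in $V$, Gronwall gives
\[
|e(t)|\le \varepsilon L_g\,\frac{e^{L_FT}-1}{L_F}.
\]
A continuity (bootstrap) argument keeps $z$ inside $V$ as long as this bound is below $1$, which holds for small $\varepsilon$. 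For the remaining $\varepsilon$ we enlarge $C_T$ as noted above.

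\textbf{Main obstacle.} The Gronwall step is routine. The key point is the algebraic identity in the first step, which converts the symplectic-projection condition into exact consistency on $\mathcal M$. The main technical care is in guaranteeing that the relevant compact sets lie in $U$ and in handling the a priori boundedness of $z$; I would resolve the latter by the bootstrap above, or by assuming that $z$ stays in a fixed compact set.
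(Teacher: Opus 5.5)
Your proposal is correct and follows the paper's proof essentially step for step. Like the paper, you rewrite $J_{2k}\nabla(H\circ\sigma)(z)$ as $D\sigma^{+}(\sigma(z))J_{2n}\nabla H(\sigma(z))$ using the projection identity, bound the defect $\dot w - J_{2k}\nabla(H\circ\sigma)(w)$ by $C\varepsilon$ from $|x-\sigma(\sigma^{+}(x))|\le\varepsilon$ and the regularity of $H$ and $\sigma^{+}$, and conclude by Gronwall; your compact neighborhood inside $U$ and your bootstrap keeping $z$ in a set where the reduced vector field is Lipschitz simply make explicit what the paper leaves to ``a standard Gronwall's argument.''
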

\begin{proof}
Set
\[
y(t):=\sigma^{+}(x(t)),\qquad F(z): = J_{2k}\nabla(H\circ \sigma)(z).
\]
By the chain rule,
\(
F(z)=J_{2k}D\sigma(z)^T\nabla H(\sigma(z)).
\)
Since $\sigma^{+}$ is the symplectic projection associated with $\sigma$, we have
\[
F(z)=D\sigma^{+}(\sigma(z))\,J_{2n}\nabla H(\sigma(z)).
\]
On the other hand, we have
\(
\dot y(t)=D\sigma^{+}(x(t))\,J_{2n}\nabla H(x(t)).
\)
Therefore,
\[
\dot y(t)-F(y(t))=r(t),\quad |r(t)|\le C\varepsilon,
\]
where we have used the identity $\sigma(y(t))=\sigma\circ\sigma^{+}(x(t))$, the bound
\(
|x(t)-\sigma(y(t))|\le \varepsilon,
\)
and the regularity of $H$ and $\sigma^{+}$.
Finally the conclusion follows from a standard Gronwall's argument.
\end{proof}

\subsection{Approximation Theorem for Symplectic embeddings}\label{sec:approximation}
Since the decoder is required to be a symplectic embedding, a central question is whether such maps can be approximated by neural network class. We therefore turn to the approximation of symplectic embeddings. In this section, we present the main results, and the proofs are deferred to \Cref{sec:proofs}.

\begin{theorem}[Factorization of linear symplectic embeddings]
\label{thm:Factorization of linear}
Let $A \in \mathbb{R}^{2n \times 2k}$ be a linear symplectic embedding satisfying $A^\top J_{2n} A = J_{2k}$. Then there exist symmetric matrices $S_i\in\mathbb{R}^{n \times n}$ such that $A$ can be factored as:
\begin{equation}
    A = \begin{pmatrix}I_n & S_5\\0 & I_n\end{pmatrix} \begin{pmatrix}I_n & 0\\S_4 & I_n\end{pmatrix}\begin{pmatrix}I_n & S_3\\0 & I_n\end{pmatrix} \begin{pmatrix}I_n & 0\\S_2 & I_n\end{pmatrix}\begin{pmatrix}I_n & S_1\\0 & I_n\end{pmatrix} S_c,
\end{equation}
where $S_c$ is the canonical linear symplectic embedding defined as
\begin{equation}\label{eq:sc}
S_c = \begin{pmatrix}
I_k & 0_{k \times k} \\
0_{(n-k) \times k} & 0_{(n-k) \times k} \\
0_{k \times k} & I_k \\
0_{(n-k) \times k} & 0_{(n-k) \times k}
\end{pmatrix} \in \mathbb{R}^{2n \times 2k}.
\end{equation}
\end{theorem}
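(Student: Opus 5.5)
The plan has two stages. First, extend the rectangular symplectic embedding $A$ to a full symplectic matrix $B\in\mathrm{Sp}(2n,\mathbb{R})$ with $BS_c=A$. Second, factor $B$ into unit triangular symplectic blocks.

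\emph{Step 1 (extension).} Write $A=(a_1,\dots,a_k,b_1,\dots,b_k)$. The condition $A^\top J_{2n}A=J_{2k}$ says these columns form a partial symplectic basis of $(\mathbb{R}^{2n},\omega)$, where $\omega(u,v)=u^\top J_{2n}v$. Their span $W$ is therefore a symplectic subspace, so its $\omega$-complement $W^{\omega}$ is a symplectic subspace of dimension $2(n-k)$ with $W\oplus W^\omega=\mathbb{R}^{2n}$. A symplectic Gram--Schmidt procedure on $W^\omega$ gives a symplectic basis $a_{k+1},\dots,a_n,b_{k+1},\dots,b_n$ of $W^\omega$. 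Set
\[
B=(a_1,\dots,a_n,b_1,\dots,b_n).
\]
Then $B^\top J_{2n}B=J_{2n}$, and $BS_c=A$ by the definition \eqref{eq:sc} of $S_c$, since $S_c$ selects columns $1,\dots,k$ and $n+1,\dots,n+k$. It therefore suffices to show that every symplectic matrix factors as $U(S_5)L(S_4)U(S_3)L(S_2)U(S_1)$, where $U(S)=\begin{pmatrix}I&S\\0&I\end{pmatrix}$ and $L(S)=\begin{pmatrix}I&0\\S&I\end{pmatrix}$ with $S$ symmetric. We may even take $S_1=0$.

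\emph{Step 2 (three factors when the lower-left block is invertible).} Let $M=\begin{pmatrix}P&Q\\R&T\end{pmatrix}$ be symplectic with $R$ invertible. A direct multiplication shows that $U(X)L(R)U(Y)=M$ with
\[
Y=R^{-1}(T-I),\qquad X=(P-I)R^{-1}.
\]
It remains to check two things: that $X$ and $Y$ are symmetric, and that the upper-right blocks agree. Both should follow from the symplectic relations $P^\top R=R^\top P$, $Q^\top T=T^\top Q$, $P^\top T-R^\top Q=I$, and their row counterparts $PQ^\top=QP^\top$, $RT^\top=TR^\top$, $PT^\top-QR^\top=I$. For example, $R^{-1}T$ is symmetric because $TR^\top=RT^\top$, and $R^{-1}$ is symmetric up to the required transpose; the remaining check is a routine computation.

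\emph{Step 3 (reduction to Step 2).} Given $B=\begin{pmatrix}P&Q\\R&T\end{pmatrix}$, right multiplication gives $BL(-S_2)=\begin{pmatrix}\ast&\ast\\R-TS_2&T\end{pmatrix}$. The plan is to find a symmetric $S_2$ making $R-TS_2$ invertible. Then Step 2 applies to $M=BL(-S_2)$, and $B=ML(S_2)=U L U L(S_2)$, which is the required form with $U(S_1)=I$.

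\emph{Main obstacle.} I expect the main difficulty to be producing the symmetric $S_2$ in Step 3. The rows of $(R\ \ T)$ span a Lagrangian subspace: they have full rank $n$ and satisfy $RT^\top=TR^\top$. First I would try a genericity argument. The map $S_2\mapsto\det(R-TS_2)$ is a polynomial on symmetric matrices, so it suffices to show it is not identically zero. To do this, choose a coordinate splitting $I\cup I^c$ such that the columns of $T$ indexed by $I$ together with the columns of $R$ indexed by $I^c$ are invertible; this is the standard fact that a Lagrangian subspace is transversal to some coordinate Lagrangian. Taking $S_2=-tE_I$, with $E_I$ the diagonal projector onto the coordinates in $I$, and letting $t\to\infty$ then gives nonvanishing of the leading coefficient of the polynomial in $t$. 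If this is messy, a fallback is to cite the known unit triangular factorization of $\mathrm{Sp}(2n,\mathbb{R})$ into at most five factors (Jin--Tang--Zhu) and combine it with Step 1.
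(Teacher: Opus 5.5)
\textbf{There is a genuine gap in Steps 2--3.} Your claim that one ``may even take $S_1=0$'' is false for $n\ge 2$. The product $U(X)L(S)U(Y)$ has lower-left block exactly $S$, so the middle factor $L(R)$ in Step 2 is an admissible factor only when $R$ is symmetric.

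The same failure appears in your formulas. By $RT^\top=TR^\top$ and $P^\top R=R^\top P$, the matrices $R^{-1}T$ and $PR^{-1}$ are symmetric. Hence $Y=R^{-1}T-R^{-1}$ and $X=PR^{-1}-R^{-1}$ are symmetric if and only if $R^{-1}$ is. Your remark that ``$R^{-1}$ is symmetric up to the required transpose'' is exactly where the argument breaks, and the upper-right check likewise needs $R^{-\top}=R^{-1}$.

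Consequently Step 3 would need a symmetric $S_2$ with $R-TS_2$ both \emph{symmetric} and invertible, and none need exist. Take $B=\begin{pmatrix}0&-R^{-\top}\\ R&0\end{pmatrix}$, which is symplectic for every invertible $R$. Then $BL(-S_2)$ has lower-left block $R$ for every $S_2$. If $R$ is not symmetric (e.g. $n=2$, $R=\begin{pmatrix}1&1\\0&1\end{pmatrix}$), this $B$ is not of the form $ULUL$. Since $L(-W)B$ also has lower-left block $R$, it is not of the form $LULU$ either. This is consistent with five unit triangular factors being optimal. That five-factor bound is due to Jin--Lin--Xiao; Jin--Tang--Zhu originally obtained nine.

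\textbf{How to repair it.} Your ingredients suffice once you use the fifth factor.

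1. Right-multiplying by $U(-S_1)$ changes the bottom row to $(R,\ T_1)$ with $T_1=T-RS_1$.
2. Your coordinate-Lagrangian determinant argument, with the roles of $R$ and $T$ interchanged, gives a symmetric $S_1$ with $T_1$ invertible.
3. Since $BU(-S_1)$ is still symplectic, $T_1^{-1}R$ is symmetric.
4. By the Taussky--Zassenhaus theorem, choose a nonsingular symmetric $Z$ with $T_1Z=ZT_1^\top$, and set $S_2=T_1^{-1}R-Z$.
5. Then $BU(-S_1)L(-S_2)$ has lower-left block $T_1Z$, which is symmetric and invertible.
6. Your Step 2 formulas now legitimately write this matrix as $U(S_5)L(S_4)U(S_3)$, so $B=U(S_5)L(S_4)U(S_3)L(S_2)U(S_1)$.

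\textbf{Comparison with the paper.} Your Step 1 is a correct, self-contained proof of the extension $A=BS_c$, which the paper simply cites from Bendokat--Zimmermann. The paper then cites the five-factor unit triangular factorization of $B$. So your stated fallback is precisely the paper's proof.
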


\Cref{thm:Factorization of linear} reveals that every linear symplectic embedding decomposes into a product of unit triangular symplectic matrices acting on the canonical linear embedding $S_c$. This triangular structure motivates a natural nonlinear generalization: replacing symmetric matrices $S_i$ by gradient maps $\nabla V_i$ of scalar potentials yields elementary symplectic shear layers that remain exactly symplectic. We now show that composing finitely many such nonlinear layers is sufficient to approximate any smooth symplectic embedding to arbitrary accuracy.

\begin{theorem}[Approximation of nonlinear symplectic embeddings]
\label{thm:Approximation of nonlinear}
Let $\Omega\subset\mathbb R^{2k}$ be open, let
$\mathcal D\subset\Omega$ be compact, and let
$\sigma:\Omega\to\mathbb R^{2n}$ be a smooth symplectic embedding. Assume that
\(\mathcal D\) is contained in a contractible open subset of \(\Omega\). For any
$\epsilon > 0$, there exist a positive integer $L$ and polynomial potentials
$V_1,\dots,V_L:\mathbb{R}^n\to\mathbb{R}$ such that
\begin{equation}
    \left\| \sigma - \mathcal S_L \circ \cdots \circ \mathcal S_2 \circ \mathcal S_1 \circ \mathcal{S_C} \right\|_{C^0(\mathcal{D})} < \epsilon ,
\end{equation}
where $\mathcal{S_C}(z) = S_c z$ is the canonical linear symplectic embedding and the shear maps alternate according to the parity of the index,
\[
\mathcal S_{2m-1}(q,p)
=
\bigl(q+\nabla V_{2m-1}(p),\, p\bigr), \quad
\mathcal S_{2m}(q,p)
=
\bigl(q,\, p+\nabla V_{2m}(q)\bigr).
\]

\end{theorem}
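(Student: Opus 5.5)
The plan is to extend $\sigma$, near $\mathcal D$, to a symplectic diffeomorphism of an open subset of $\mathbb R^{2n}$ that maps $S_c(\mathcal D)$ onto $\sigma(\mathcal D)$. We then approximate that diffeomorphism by compositions of polynomial shears, using known universal approximation results for symplectic maps, and finish with a change of coordinates given by \Cref{thm:Factorization of linear}.

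\textbf{Step 1 (reduction to a symplectic diffeomorphism).} Let $W\subset\Omega$ be the contractible open set containing $\mathcal D$, and shrink it to a relatively compact contractible neighborhood. On $W$ we have a symplectic embedding $\sigma$. Its image $\sigma(W)$ is a $2k$-dimensional symplectic submanifold of $\mathbb R^{2n}$, and its symplectic normal bundle $N=(T\sigma(W))^{\perp_\omega}$ is a symplectic vector bundle of rank $2(n-k)$. Because $W$ is contractible, $N$ is trivial as a symplectic bundle. Pick a smooth symplectic frame $e_1,\dots,e_{n-k},f_1,\dots,f_{n-k}$ of $N$ along $\sigma$. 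Define
\[
\Psi_0(z,u,v)=\sigma(z)+\sum_j u_j e_j(z)+\sum_j v_j f_j(z).
\]
Here $(z,u,v)$ is reordered so that $S_c z$ corresponds to $(u,v)=0$. Along $u=v=0$ the differential of $\Psi_0$ is symplectic, and $\Psi_0\circ S_c=\sigma$. By the relative Darboux/Weinstein symplectic neighborhood theorem (Moser's trick applied to $\Psi_0^*\omega_{2n}$ and $\omega_{2n}$, which agree on $S_c(W)$), we obtain a symplectic diffeomorphism $\Psi$ from a neighborhood $U$ of $S_c(\overline W)$ into $\mathbb R^{2n}$ with $\Psi\circ S_c=\sigma$ on a neighborhood of $\mathcal D$. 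The Moser correction is the identity on the zero section, so this equality holds exactly.

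\textbf{Step 2 (approximation by shears).} Fix a compact neighborhood $K\subset U$ of $S_c(\mathcal D)$. We need compositions of polynomial shears $\mathcal S_L\circ\dots\circ\mathcal S_1$ that are $C^0(K)$-close to $\Psi$.

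1. First use a Hamiltonian isotopy argument. Since $U$ may be taken as a contractible tubular neighborhood, $\Psi$ restricted to $K$ is isotopic through symplectic embeddings to a linear symplectic map: shrink via $\Psi_s(w)=\Psi(w_0+s(w-w_0))/s$ and let $s\to 0$ to reach $D\Psi(w_0)$. Alternatively, invoke the known result that any symplectic embedding of a contractible domain agrees on a compact set with a compactly supported Hamiltonian diffeomorphism composed with a linear symplectic map.
2. Write the Hamiltonian diffeomorphism as a time-$1$ flow and split it into many small time steps. Approximate each step by a symplectic Euler step. A symplectic Euler step equals a composition of shears $(q,p)\mapsto(q+h\nabla_p\cdot,p)$ only for separable Hamiltonians. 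To handle general Hamiltonians, use the generating-function route: a $C^1$-small symplectic map near the identity has a type-II generating function. Alternatively, use the established fact (e.g., the SympNet universal approximation theorem, or Turaev's result that compositions of Hénon-like maps $(q,p)\mapsto(p+\nabla V(q),-q)$ are dense in symplectic diffeomorphisms) that alternating shears with smooth potentials are $C^r$-dense on compacta.
3. Replace smooth potentials by polynomials via Stone–Weierstrass in the $C^1$ norm on compacta. Uniform continuity of the composition then controls the total error.

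\textbf{Step 3 (linear part).} The linear symplectic factor is exactly a product of unit triangular blocks with symmetric $S_i$, by the same argument as in \Cref{thm:Factorization of linear}; these blocks are shears with quadratic potentials. Absorb these, and the reordering symplectic permutation in Step 1, into the shear sequence. This gives $\|\sigma-\mathcal S_L\circ\cdots\circ\mathcal S_1\circ\mathcal S_C\|_{C^0(\mathcal D)}<\epsilon$.

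The main obstacle is Step 1 together with the density claim in Step 2. Step 1 uses contractibility (triviality of the normal bundle) and Moser's argument. Step 2 needs an approximation theorem for general symplectic diffeomorphisms by polynomial shears on compacta. I would cite Turaev's density result or the SympNet theorem there rather than reprove it.
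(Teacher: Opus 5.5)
Your proposal follows the paper's proof. Step 1 is the paper's nonlinear canonical decomposition: contractibility makes the symplectic normal bundle trivial, a tubular map has symplectic differential along the zero section, and a relative Darboux/Moser correction fixes the zero section, so $\sigma=\Psi\circ\mathcal S_C$ holds exactly. Step 2 rests, as in the paper, on citing Turaev's approximation of a symplectomorphism on a compact set by compositions of polynomial shears.

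The isotopy-to-linear detour in Step 2 and the linear factorization in Step 3 are unnecessary once that result is applied directly to $\Psi$. As written, the detour would also need a neighborhood that is star-shaped about $w_0$, not merely contractible, for the rescaling $w_0+s(w-w_0)$ to stay in $U$.
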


\Cref{thm:Approximation of nonlinear} reduces the approximation of an arbitrary symplectic embedding to the approximation of the scalar potential functions $V_i$. Consequently, any class of function approximators that is dense in $C^1$ on compact sets, such as fully-connected networks, residual networks, or convolutional networks, immediately yields a universally approximating family of symplectic embeddings. According to \Cref{thm:Approximation of nonlinear}, we can directly parameterize each $V_i$ by a neural network, and the universal approximation property of the resulting symplectic decoder is an immediate corollary of the theorem above, as presented in the next section.

We note that concurrent work established a related approximation result based on HénonNet composite embeddings \cite{chen2025reduced}. Their result does not require a contractibility assumption because it is developed for symplectic embeddings with the prescribed structure
\[
\begin{cases}
\hat{x} = Py + \eta,\\
\hat{y} = -Px + \nabla V(Py).
\end{cases}
\]
By contrast, we consider general symplectic embeddings in the standard sense \cite{cristofaro2018symplectic}, without imposing such a structural form. This generality requires the compact set under consideration to be contained in a contractible open subset, thereby ensuring the trivialization of the associated symplectic normal bundle required in our analysis.

\section{Architecture of symplecticity-preserving autoencoders}\label{sec:architecture}
Based on this approximation result, we construct in this section a symplecticity-preserving encoder--decoder pair.

\subsection{Symplecticity-preserving decoder} \label{sec:nonlinear_lift}
In view of \Cref{thm:Approximation of nonlinear}, we parameterize a nonlinear symplectic embedding by a structure-preserving neural network of the form
\begin{equation}\label{eq:lift_definition}
\sigma_\theta
=
\mathcal S_{L,\theta_L}\circ\cdots\circ \mathcal S_{1,\theta_1}\circ \mathcal{S_C} ,
\qquad \mathcal{S_C}(z)=S_c z,
\end{equation}
where $\theta = \{\theta_1, \cdots, \theta_L\}$ denotes all the trainable parameters; each $\mathcal S_{i,\theta_i}$ is a symplectic shear map generated by a scalar-valued neural network $V_{i,\theta_i}$.
More precisely, for $x=(q,p)\in\mathbb R^{2n}$ with $q,p\in\mathbb R^n$, we define
\[
\mathcal S_{2m-1,\theta_{2m-1}}(q,p)
=
\bigl(q+\nabla V_{2m-1,\theta_{2m-1}}(p),\, p\bigr),
\]
and
\[
\mathcal S_{2m,\theta_{2m}}(q,p)
=
\bigl(q,\, p+\nabla V_{2m,\theta_{2m}}(q)\bigr).
\]
Here each $V_{i,\theta_i}:\mathbb R^n\to\mathbb R$ is realized by a scalar-output feedforward neural network with smooth activation functions. It is remarked that the update rules for the even and odd shear maps can be entirely interchanged without affecting the underlying symplectic structure.
Since each map $\mathcal S_{i,\theta_i}$ is symplectic and the canonical linear embedding $\mathcal{S_C}(z)=S_c z$ satisfies $S_c^\top J_{2n}S_c=J_{2k}$,
the composite map $\sigma_\theta$ defines a symplectic embedding. In particular, for every $z$ in the domain of $\sigma_\theta$,
\[
(D\sigma_\theta(z))^\top J_{2n}D\sigma_\theta(z)=J_{2k}.
\]
Since the symplectic structure is preserved by construction, we shall refer to $\sigma_\theta$ as a \emph{symplecticity-preserving decoder}.

The above parameterization is also sufficiently expressive to approximate symplectic embeddings on compact contractible domains, as stated in the following theorem.

\begin{theorem}
\label{prop:NN_approx_symp_lift}

Assume that the activation function is smooth and that scalar-output feedforward neural networks with this activation are dense in $C^1(K)$ for every compact set $K\subset\mathbb R^n$. Then
under the conditions of \cref{thm:Approximation of nonlinear}, for every $\varepsilon>0$, there exist an integer $L$ and network parameters $\theta$ such that the symplectic neural embedding $\sigma_\theta$ defined in \eqref{eq:lift_definition} satisfies
\[
\|\sigma-\sigma_\theta\|_{C^0(\mathcal D)}<\varepsilon.
\]
\end{theorem}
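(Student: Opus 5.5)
The proof is a perturbation argument built on \cref{thm:Approximation of nonlinear}. Fix $\varepsilon>0$. First apply \cref{thm:Approximation of nonlinear} with tolerance $\varepsilon/2$. This yields an integer $L$ and polynomial potentials $V_1,\dots,V_L$ such that
\[
\Phi:=\mathcal S_L\circ\cdots\circ\mathcal S_1\circ\mathcal S_C
\]
satisfies $\|\sigma-\Phi\|_{C^0(\mathcal D)}<\varepsilon/2$. It then suffices to replace each $V_i$ by a network $V_{i,\theta_i}$ in such a way that the composite $\sigma_\theta$ stays within $\varepsilon/2$ of $\Phi$ on $\mathcal D$. Since each $\mathcal S_i$ depends on $V_i$ only through $\nabla V_i$, $C^1$-closeness of the potentials gives $C^0$-closeness of the shear maps. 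This is exactly what the density hypothesis provides.

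The perturbation is controlled layer by layer on compact sets. Set $K_0:=S_c\mathcal D\subset\mathbb R^{2n}$, which is compact. Define $K_i:=\mathcal S_i(K_{i-1})$ and let $\widetilde K_i$ be the closed unit neighborhood of $K_i$; all of these are compact. Each polynomial shear $\mathcal S_i$ is smooth, hence Lipschitz on $\widetilde K_{i-1}$ with some constant $\Lambda_i$. Let $\Pi_i\subset\mathbb R^n$ be the compact projection of $\widetilde K_{i-1}$ onto the coordinate ($p$ or $q$, according to parity) that enters $\nabla V_i$. If $\|\nabla V_i-\nabla V_{i,\theta_i}\|_{C^0(\Pi_i)}\le\delta_i$, then $|\mathcal S_{i,\theta_i}(x)-\mathcal S_i(x)|\le\delta_i$ for every $x\in\widetilde K_{i-1}$, because only one block changes.

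Next, write $x_i=\mathcal S_i(x_{i-1})$ for the exact iterates and $\hat x_i=\mathcal S_{i,\theta_i}(\hat x_{i-1})$ for the network iterates, both started at $x_0=\hat x_0\in K_0$. As long as $|\hat x_{i-1}-x_{i-1}|\le 1$, the point $\hat x_{i-1}$ lies in $\widetilde K_{i-1}$. The triangle inequality then gives
\[
|\hat x_i-x_i|\le|\mathcal S_{i,\theta_i}(\hat x_{i-1})-\mathcal S_i(\hat x_{i-1})|+|\mathcal S_i(\hat x_{i-1})-\mathcal S_i(x_{i-1})|\le\delta_i+\Lambda_i|\hat x_{i-1}-x_{i-1}|.
\]
Unrolling this recursion, the final error is at most $\sum_i\delta_i\prod_{j>i}\Lambda_j$. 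Choosing the $\delta_i$ small enough makes this bound less than $\min(1,\varepsilon/2)$. The same bound keeps every intermediate error at most $1$, so the inductive hypothesis that $\hat x_{i-1}\in\widetilde K_{i-1}$ is maintained at every step. Each $\delta_i$ is achievable because networks are dense in $C^1(\Pi_i)$ and $\nabla V_i$ is continuous on $\Pi_i$. Combining the two estimates gives $\|\sigma-\sigma_\theta\|_{C^0(\mathcal D)}<\varepsilon$.

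The main obstacle is the bookkeeping in this stability step. The networks are only guaranteed to be close on prescribed compact sets, so the perturbed trajectory must be kept inside the enlarged sets $\widetilde K_{i-1}$. That is why the neighborhoods must be fixed before the $\delta_i$ are chosen, and why the total error must be forced below $1$ as well as below $\varepsilon/2$. Symplecticity of $\sigma_\theta$ needs no separate argument, since it holds by construction as noted after \eqref{eq:lift_definition}.
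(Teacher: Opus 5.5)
Your proof is correct and takes the same route as the paper: apply \cref{thm:Approximation of nonlinear} with tolerance $\varepsilon/2$, replace each polynomial shear by a neural shear using $C^1$-density of the potentials, and conclude by the triangle inequality. The only difference is that the paper cites Lemma~1 of \cite{zhu2022approximation} for the composition-stability step, whereas you prove it directly through the Lipschitz recursion on the enlarged compact sets $\widetilde K_{i}$. For your claim that the final bound also controls every intermediate error, take $\Lambda_i\ge 1$, which loses no generality.
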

It is remarked that the assumptions in \cref{prop:NN_approx_symp_lift} are covered by classical $C^1$-universal approximation results for neural networks. Smooth activation functions such as sigmoid and tanh satisfy the regularity requirement, and the corresponding feedforward neural networks enjoy universal approximation properties on compact sets \cite{cybenko1989approximation,hornik1990universal,shen2021neural}. The proof of \cref{prop:NN_approx_symp_lift} is given in \cref{sec:proofs}.

In the next subsection, we introduce the adjoint projection $\sigma_\theta^+$ associated with $\sigma_\theta$, in a manner consistent with \Cref{def:sym_projection}.

\subsection{Symplecticity-preserving encoder} \label{sec:nonlinear_projection}

Based on the parameterized symplectic embedding established in \Cref{sec:nonlinear_lift}, we define the
nonlinear symplectic projection by:
\begin{equation}\label{eq:proj-def}
    \sigma_\theta^{+} := \mathcal{S_C^\dagger} \circ
\mathcal S^{-1}_{1,\theta_1}\circ\cdots\circ \mathcal S^{-1}_{L,\theta_L}.
\end{equation}
Here $\mathcal S_C^\dagger(x)=S_c^\top x$, and $S_c$ is the canonical embedding matrix \eqref{eq:sc}. In this case, $S_c^\top$ is both the Moore--Penrose inverse of $S_c$ and the symplectic projection matrix associated with $S_c$.
For $x=(q,p)\in\mathbb R^{2n}$ with $q,p\in\mathbb R^n$, the inverse of symplectic shear map $\mathcal S_{i,\theta_{i}}$ can be expressed explicitly as
\[
\mathcal S^{-1}_{2m-1,\theta_{2m-1}}(q,p)
=
\bigl(q-\nabla V_{2m-1,\theta_{2m-1}}(p),\, p\bigr),
\]
and
\[
\mathcal S^{-1}_{2m,\theta_{2m}}(q,p)
=
\bigl(q,\, p-\nabla V_{2m,\theta_{2m}}(q)\bigr).
\]
Since each map $\mathcal S^{-1}_{i,\theta_i}$ is symplectic, and the canonical linear projection $S_c^\top$ satisfies $J_{2k}S_c^\top= S_c^\top J_{2n}$ and $S_c^\top S_c=I_{2k}$,
the composite map $\sigma^{+}_\theta$ satisfies
\begin{equation*}
    \sigma_\theta^+\circ \sigma_\theta = \mathrm{Id}_{\mathbb{R}^{2k}},\quad \text{and} \quad
    J_{2k}\,D\sigma_\theta^+=(D\sigma_\theta)^\top J_{2n},
\end{equation*}
and thus is a symplectic projection of the symplectic
embedding of $\sigma_{\theta}$. Therefore, we refer to $\sigma^{+}_\theta$ as a \emph{symplecticity-preserving encoder}.

\subsection{Symplecticity-preserving autoencoder} \label{sec:retraction_operator}
\begin{figure}
    \centering
    \includegraphics[width=1\linewidth]{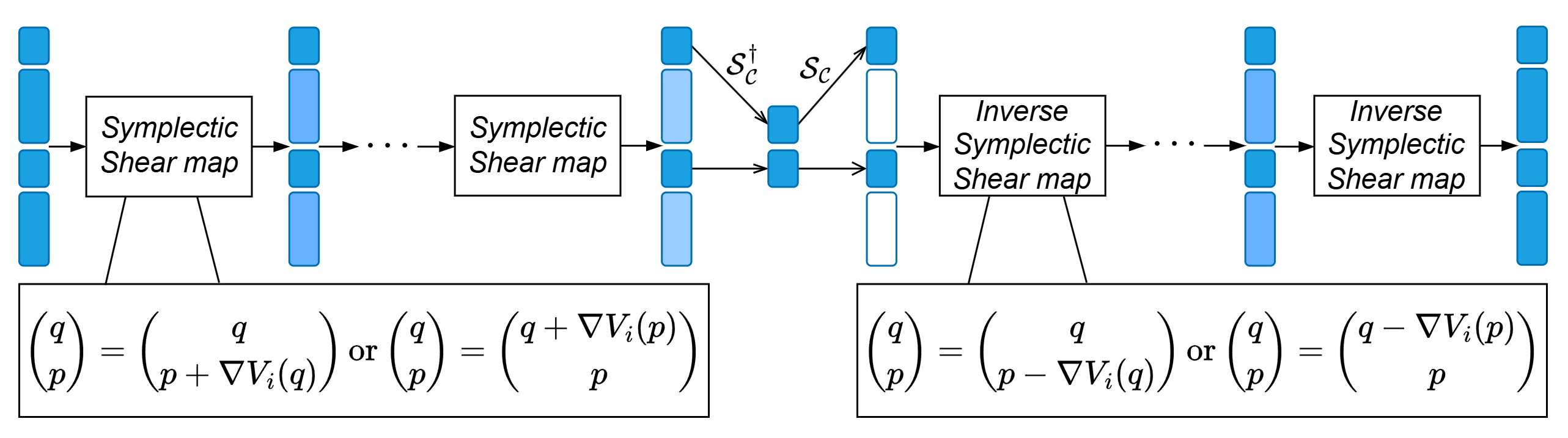}
    \caption{The architecture of the proposed SpAE}
    \label{fig:architecture}
\end{figure}
Let $\sigma_\theta^+:\mathbb R^{2n}\to\mathbb R^{2k}$ and $\sigma_\theta:\mathbb R^{2k}\to\mathbb R^{2n}$ be the symplecticity-preserving encoder and decoder defined above. Define the associated \emph{symplecticity-preserving autoencoder} (SpAE) by
\begin{equation} \label{eq:projection_operator}
    \Pi_\theta := \sigma_\theta \circ \sigma_\theta^{+} = \mathcal S_{L,\theta_L}\circ\cdots\circ \mathcal S_{1,\theta_1}\circ \mathcal{S_C}\circ \mathcal{S_C^\dagger}\circ \mathcal S^{-1}_{1,\theta_1}\circ\cdots\circ \mathcal S^{-1}_{L,\theta_L},
\end{equation}
which is a retraction operator;  see \Cref{fig:architecture} for an illustration.
The retraction operator $\Pi_\theta$ is idempotent satisfying $\Pi_\theta^2=\Pi_\theta$ and $\text{Im}(\Pi_\theta)=\text{Im}(\sigma_\theta)$.
It is noted that the retraction operator $\Pi_{\theta}$ is not a symplectic diffeomorphism on the full space $\mathbb{R}^{2n}$ due to the inherent dimensionality reduction.

When the map $\Pi_\theta $ is restricted to the sub-manifold $\text{Im}(\sigma_\theta)$, it is an identity map. Intuitively, the symplecticity-preserving encoder $\sigma_\theta^{+}$ first projects the sub-manifold $\text{Im}(\sigma_\theta)$ onto the standard $2k$-dimensional space. The decoder $\sigma_\theta$ then embeds this reduced representation back into the $2k$-dimensional sub-manifold.
More precisely,
\[\mathcal S^{-1}_{1,\theta_1}\circ\cdots\circ \mathcal S^{-1}_{L,\theta_L}(x)= y=(Q_k,0,P_k,0),\quad \Pi_\theta (x)= x,\quad \forall x\in\text{Im}(\sigma_\theta).\]

For a $2n$-dimensional $(n>k)$ symplectic manifold, which is approximately a learned $2k$-dimensional symplectic sub-manifold, there also exists a series of symplectic shear maps $\mathcal S^{-1}_{i,\theta_i}$, such that
\[\mathcal S^{-1}_{1,\theta_1}\circ\cdots\circ \mathcal S^{-1}_{L,\theta_L}(x)= y=(Q_k,\varepsilon_{n-k},P_k,\varepsilon_{n-k}),\]
and thus $\Pi_\theta (x)\in \text{Im}(\sigma_\theta)$.
Therefore, $\Pi_\theta$ constitutes a proper retraction onto the learned symplectic submanifold $\text{Im}(\sigma_\theta)$: it maps any point $x\in\mathbb{R}^{2n}$ to a corresponding point on $\text{Im}(\sigma_\theta)$ and satisfies the identity condition $\Pi_\theta|_{\text{Im}(\sigma_\theta)}=\text{id}$.
The retraction can be expressed explicitly as
\begin{equation*}
\begin{aligned}
x\in\mathbb{R}^{2n}
&\xrightarrow{\ \mathcal S^{-1}_{1,\theta_1}\circ\cdots\circ
\mathcal S^{-1}_{L,\theta_L}\ }
\tilde{x}\approx(Q_k,\varepsilon_{n-k},P_k,\varepsilon_{n-k})
\xrightarrow{\ \mathcal{S_C^\dagger}\ }
z=(Q_k,P_k)\in\mathbb{R}^{2k} \\
&\xrightarrow{\ \mathcal{S_C}\ }
\mathcal{S_C}(z)=(Q_k,0,P_k,0)
\xrightarrow{\ \mathcal S_{L,\theta_L}\circ\cdots\circ
\mathcal S_{1,\theta_1}\ }
\hat{x}\in\operatorname{Im}(\sigma_\theta).
\end{aligned}
\end{equation*}

Motivated by the data flow above, we train the retraction by penalizing the component removed by the middle truncation. Let
\(\mathcal T_\theta := \mathcal S_{L,\theta_L}\circ\cdots\circ\mathcal S_{1,\theta_1}\)
and \(\mathcal P := \mathcal S_C\circ\mathcal S_C^\dagger\), so that
\(\Pi_\theta=\mathcal T_\theta\circ\mathcal P\circ\mathcal T_\theta^{-1}\).
For a snapshot matrix \(X\), define \(Y_\theta:=\mathcal T_\theta^{-1}(X)\). Our training objective is then
\begin{equation}\label{eq:loss-inact} \mathcal L_{\rm trunc}(\theta) := \|Y_\theta-\mathcal P Y_\theta\|_F = \|\mathcal T_\theta^{-1}(X) -\mathcal T_\theta^{-1}(\Pi_\theta X)\|_F . \end{equation}
For comparison, the standard autoencoder-type reconstruction loss is
\begin{equation}\label{eq:loss-rec}
\mathcal L_{\rm rec}(\theta)
:=
\|X-\Pi_\theta X\|_F
=
\|\mathcal T_\theta(Y_\theta)
-\mathcal T_\theta(\mathcal P Y_\theta)\|_F .
\end{equation}
If \(\mathcal T_\theta\) is bi-Lipschitz on the relevant region, with
\(\operatorname{Lip}(\mathcal T_\theta),
\operatorname{Lip}(\mathcal T_\theta^{-1})\le C\) for some constant \(C\)
independent of \(\theta\), then the two losses are comparable:
\[
\frac{1}{C}\mathcal L_{\rm trunc}(\theta)
\le
\mathcal L_{\rm rec}(\theta)
\le
C\mathcal L_{\rm trunc}(\theta).
\]

\section{Proofs}\label{sec:proofs}

In this section, we prove the approximation results in above sections. We begin by recalling the basic notions from symplectic geometry used throughout the arguments, then establish the linear factorization theorem, and finally prove the nonlinear approximation theorem by
combining a tubular-neighborhood construction, a relative Darboux argument, and
Turaev's approximation theorem for symplectomorphisms.

\subsection{Preliminaries}\label{sec:preliminaries}
This subsection reformulates the notions introduced earlier in the language of symplectic manifolds.

Let $\omega$ be a de Rham 2-form on a manifold $M$, that is, for each $p \in M$, the map $\omega_p : T_pM \times T_pM \to \mathbb{R}$ is skew-symmetric bilinear on the tangent space to $M$ at $p$, and $\omega_p$ varies smoothly in $p$. We say that $\omega$ is {symplectic} if it is closed (i.e., $d\omega = 0$) and non-degenerate (i.e., for all $p \in M$, the bilinear form $\omega_p$ is non-degenerate). A {symplectic manifold}~\cite{da2008lectures} is a pair $(M, \omega)$ where $M$ is a manifold and $\omega$ is a symplectic form.
In canonical coordinates $x=(q,p)\in\mathbb{R}^{2d}$ with $q=(q_1,\dots,q_d)$ and $p=(p_1,\dots,p_d)$, the standard symplectic form is
\begin{equation*}\label{eq:sympform}
    \omega_0 \;=\; \sum_{i=1}^d dq_i\wedge dp_i,
\end{equation*}
in the standard basis, this form is represented by the matrix $J_{2d}$ as $\omega_0(u, v) = u^\top J_{2d} v$ for $u,v \in \mathbb{R}^{2d}$.

\begin{definition}[Symplectomorphism \cite{da2008lectures}]\label{def:symplectomorphism}
Consider $2d$-dimensional symplectic manifolds $(M_1,\omega_1)$ and $(M_2,\omega_2)$. A diffeomorphism
$\varphi:(M_1,\omega_1)\to(M_2,\omega_2)$ is a \emph{symplectomorphism} if $\varphi^*\omega_2=\omega_1$.
\end{definition}
Specifically, for a differentiable map $\Phi: U \to \mathbb{R}^{2d}$ defined on an open subset $U \subset \mathbb{R}^{2d}$ with Jacobian $D\Phi(x)$, we have
\begin{equation}\label{eq:jac-symp}
\Phi^*\omega_0=\omega_0 \quad\Longleftrightarrow\quad D\Phi(x)^\top J_{2d} D\Phi(x) = J_{2d} \quad\text{for all } x \in U.
\end{equation}

\begin{definition}[Symplectic embedding \cite{cristofaro2018symplectic}]
Let \(U\subset\mathbb R^{2k}\) be open, let \(\omega_{2k}\) be the standard
symplectic form on \(\mathbb R^{2k}\), and let \((M,\omega)\) be a
symplectic manifold. A smooth embedding \(\sigma:U\to M\) is
\emph{symplectic} if \(\sigma^*\omega=\omega_{2k}|_U\). When
\((M,\omega)=(\mathbb R^{2n},\omega_{2n})\), this is equivalent to the
Jacobian condition stated in \Cref{def:symplectic_embedding}.
\end{definition}

\subsection{Proof of \Cref{thm:Factorization of linear}}

\begin{proof}
Since \(A\in\mathbb{R}^{2n\times 2k}\) satisfies
\(A^\top J_{2n}A=J_{2k}\), it follows from~\cite{bendokat2021real} that
there exists \(B\in\mathrm{Sp}(2n,\mathbb{R})\) such that
$A=BS_c$.
By the unit triangular factorization of symplectic
matrices~\cite{jin2022optimal,jin2020unit}, \(B\) can be written as
\begin{equation*}
    B=
\begin{pmatrix}I_n&S_5\\0&I_n\end{pmatrix}
\begin{pmatrix}I_n&0\\S_4&I_n\end{pmatrix}
\begin{pmatrix}I_n&S_3\\0&I_n\end{pmatrix}
\begin{pmatrix}I_n&0\\S_2&I_n\end{pmatrix}
\begin{pmatrix}I_n&S_1\\0&I_n\end{pmatrix},
\end{equation*}
where \(S_i\in\mathbb{R}^{n\times n}\) are symmetric. Substituting this
into \(A=BS_c\) concludes the proof.
\end{proof}

\subsection{Proof of \Cref{thm:Approximation of nonlinear}}

\begin{theorem}[Relative Darboux--Weinstein Theorem
{\cite[Ch.~III, Thm.~15.2]{libermann1987symplectic}}]
\label{thm:relative_darboux}
Let \(M\) be an even-dimensional smooth manifold, let \(N\subset M\) be an
embedded submanifold, and let \(\omega_0\) and \(\omega_1\) be symplectic
forms on \(M\). Assume that the forms are equal on \(N\) in the sense of
\cite[App.~1, \S7.3]{libermann1987symplectic}: for every \(x\in N\), $\omega_0(x)=\omega_1(x)$ as alternating bilinear forms on $T_xM$.
Then there exist open neighborhoods \(U\) and \(U'\) of \(N\) in \(M\) and a
diffeomorphism \(\chi:U\to U'\) such that $\chi|_N=\operatorname{id}_N,\ \chi^*\omega_1=\omega_0$.
Moreover, \(\chi\) may be chosen so that $T_x\chi=\operatorname{id}_{T_xM}, \ x\in N$.

\end{theorem}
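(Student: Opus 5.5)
The proof will follow Moser's path method, applied in a tubular neighborhood of \(N\).

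\textbf{Step 1: the path of forms.} Set \(\omega_t:=\omega_0+t(\omega_1-\omega_0)\) for \(t\in[0,1]\). Each \(\omega_t\) is closed. For \(x\in N\), the hypothesis \(\omega_0(x)=\omega_1(x)\) on \(T_xM\) gives \(\omega_t(x)=\omega_0(x)\), which is nondegenerate. Nondegeneracy is an open condition and \([0,1]\) is compact. Hence there is an open neighborhood \(U_0\) of \(N\) on which every \(\omega_t\), \(t\in[0,1]\), is symplectic. If \(N\) is noncompact, I choose \(U_0\) pointwise by a locally finite cover of \(N\) and take the union.

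\textbf{Step 2: relative Poincaré lemma.} Shrink \(U_0\) to a tubular neighborhood \(U_1\) of \(N\) that is identified with a neighborhood of the zero section of the normal bundle \(\nu N\). Its fibers are star-shaped, so the fiberwise scalings \(\phi_s(x,v)=(x,sv)\), \(s\in[0,1]\), preserve \(U_1\). They satisfy \(\phi_1=\mathrm{id}\), and \(\phi_0\) is the projection onto \(N\).

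Let \(\beta:=\omega_1-\omega_0\). It is closed, and it vanishes pointwise on \(N\) as a form on \(T_xM\), so \(\phi_0^*\beta=0\). Writing \(Y_s\) for the generating vector field of \(\phi_s\) (for \(s>0\), \(Y_s=s^{-1}\) times the Euler field in the fibers), the homotopy formula gives
\[
\beta=\phi_1^*\beta-\phi_0^*\beta=d\alpha,\qquad \alpha:=\int_0^1\phi_s^*(\iota_{Y_s}\beta)\,ds .
\]
The integrand is smooth at \(s=0\), because \(\phi_s^*\iota_{Y_s}\beta\) at \((x,v)\) equals \(\beta_{(x,sv)}(v,D\phi_s\,\cdot)\).

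In local coordinates \((x,v)\), the Euler field is \(O(|v|)\), and \(\beta=O(|v|)\) because it vanishes on \(N\). Therefore \(\alpha=O(|v|^2)\): both \(\alpha\) and its first derivatives vanish along \(N\).

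\textbf{Step 3: Moser vector field and flow.} Define the time-dependent vector field \(X_t\) on \(U_1\) by \(\iota_{X_t}\omega_t=-\alpha\). This is well defined by Step 1. Since the \(1\)-jet of \(\alpha\) vanishes on \(N\), \(X_t\) vanishes on \(N\) together with its first derivatives.

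Consequently the isotopy \(\psi_t\) generated by \(X_t\) fixes \(N\) pointwise. Points sufficiently close to \(N\) have flow lines that stay in \(U_1\) for all \(t\in[0,1]\). This is seen by comparing with the stationary points on \(N\), where continuous dependence on initial data applies; for noncompact \(N\) the argument is done locally on a locally finite cover. So there is an open \(U\supset N\) on which \(\psi_t\) is defined for \(t\in[0,1]\).

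Then
\[
\tfrac{d}{dt}\psi_t^*\omega_t=\psi_t^*\bigl(\mathcal L_{X_t}\omega_t+\beta\bigr)=\psi_t^*\bigl(d\iota_{X_t}\omega_t+d\alpha\bigr)=0 .
\]
Hence \(\chi:=\psi_1:U\to U':=\psi_1(U)\) satisfies \(\chi^*\omega_1=\omega_0\) and \(\chi|_N=\mathrm{id}_N\).

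Finally, differentiating the flow equation along \(N\) gives \(\tfrac{d}{dt}T_x\psi_t=DX_t(x)\,T_x\psi_t=0\) for \(x\in N\). Therefore \(T_x\chi=\mathrm{id}_{T_xM}\).

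\textbf{Main obstacle.} The delicate part is Step 2 together with the domain issue in Step 3. I need a primitive \(\alpha\) that vanishes to second order on \(N\), not merely one that vanishes on \(N\); this is what is required for \(T_x\chi=\mathrm{id}\). The fiberwise-scaling homotopy operator supplies this. I also need to guarantee that the time-one flow exists on a uniform neighborhood when \(N\) is noncompact, which I would handle with the locally finite shrinking argument described above.
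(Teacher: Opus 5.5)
Your proof is correct. It is the standard Moser--Weinstein argument, and it also handles the extra claim: your homotopy-operator primitive vanishes to second order along \(N\), because \(\beta\) vanishes on all of \(T_xM\) and not just on \(T_xN\), and this is exactly what gives \(T_x\chi=\mathrm{id}_{T_xM}\).

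The paper does not prove this statement itself; it quotes it from Libermann--Marle, whose route is, as far as I recall, essentially yours: a relative Poincar\'e lemma built from a tubular neighbourhood, followed by Moser's isotopy. One minor point: the locally finite covers you invoke for noncompact \(N\) are unnecessary, since the union of the pointwise neighbourhoods from the tube lemma, and from openness of the flow domain, is already open.
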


\begin{theorem}[Tubular neighborhood Theorem
{\cite[App.~1, Thm.~6.2]{libermann1987symplectic}}]
\label{thm:tubular_neighborhood}
Let \(N\) be an embedded submanifold of a smooth manifold \(M\), and let
\(E\subset T_NM\) be a smooth vector subbundle complementary to \(TN\). Then
there exist an open neighborhood \(W\) of the zero section in \(E\), an open
neighborhood \(U\) of \(N\) in \(M\), and a diffeomorphism
\(\tau:W\to U\) such that \(\tau(0_x)=x\) for every \(x\in N\). Moreover,
\(\tau\) may be chosen so that, under the canonical splitting
\(T_{0_x}E\simeq T_xN\oplus E_x\),
\[
    T_{0_x}\tau(v,\eta)=v+\eta\in T_xM,
    \qquad v\in T_xN,\ \eta\in E_x .
\]
Equivalently, the inverse tubular chart \(\rho=\tau^{-1}:U\to W\) restricts
to the zero section on \(N\) and has identity derivative on the chosen normal
bundle \(E\).

\end{theorem}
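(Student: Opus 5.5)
The plan is the classical construction via an exponential-type map restricted to \(E\), followed by an inverse-function argument along the zero section and a global injectivity argument. First I fix an auxiliary complete Riemannian metric \(g\) on \(M\); in the application \(M=\mathbb R^{2n}\), and one may simply take the flat metric. Let \(\exp\) be its exponential map and define
\[
\tau:E\supset W_0\to M,\qquad \tau(\eta_x):=\exp_x(\eta_x),
\]
where \(W_0\) is the open set of vectors at which \(\exp\) is defined. In the Euclidean case \(W_0=E\) and \(\tau(\eta_x)=x+\eta_x\). Clearly \(\tau(0_x)=\exp_x(0)=x\), so the first requirement holds.

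Next I compute the derivative along the zero section. Under the canonical splitting \(T_{0_x}E\simeq T_xN\oplus E_x\) I treat the two summands separately.
\begin{itemize}
\item Horizontal direction: \(\tau\) restricted to the zero section is the inclusion \(N\hookrightarrow M\), so \(T_{0_x}\tau(v,0)=v\).
\item Vertical direction: \(\tau\) restricted to the fibre \(E_x\) is \(\exp_x|_{E_x}\), whose differential at \(0\) is the identity. Hence \(T_{0_x}\tau(0,\eta)=\eta\).
\end{itemize}
Adding these gives \(T_{0_x}\tau(v,\eta)=v+\eta\). Since \(E\) is complementary to \(TN\), this map \(T_xN\oplus E_x\to T_xM\) is a linear isomorphism. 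By the inverse function theorem, \(\tau\) is therefore a local diffeomorphism on some open neighborhood \(W_1\) of the zero section.

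The main obstacle is \emph{global injectivity}, since \(\tau\) is only locally injective near each \(0_x\). I would use the standard topological lemma: if a local homeomorphism is injective on a closed subset \(Z\) (here the zero section, on which \(\tau\) is the embedding \(N\hookrightarrow M\)), then it is injective on some neighborhood of \(Z\). This requires paracompactness and that \(N\) be embedded, so that \(\tau|_{Z}\) is a homeomorphism onto its image.

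Concretely, I would proceed as follows.
\begin{itemize}
\item Fix a fibre metric on \(E\).
\item Cover \(N\) by a locally finite family of relatively compact charts and on each obtain an injectivity radius.
\item Glue these radii by a partition of unity into a continuous positive function \(\delta:N\to(0,\infty)\).
\item Set \(W=\{\eta_x:|\eta_x|<\delta(x)\}\).
\end{itemize}
To show \(\tau|_W\) is injective, I argue by contradiction. Suppose there are sequences \(\eta_j\neq\eta'_j\) with \(\tau(\eta_j)=\tau(\eta'_j)\) and \(\delta\to0\) along them. If the base points stay in a compact set, I pass to convergent subsequences. The limits lie on the zero section and have the same image, so they coincide by injectivity of \(N\hookrightarrow M\). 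This contradicts local injectivity near that limit point. If the base points escape to infinity, I use embeddedness: \(N\) carries the subspace topology, so nearby images force nearby base points. In the Euclidean case this is immediate from \(|x+\eta-x'-\eta'|\ge |x-x'|-|\eta|-|\eta'|\).

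Finally I set \(U=\tau(W)\), which is open because \(\tau\) is a local diffeomorphism. Then \(\tau:W\to U\) is a bijective local diffeomorphism, hence a diffeomorphism. The inverse \(\rho=\tau^{-1}\) sends \(N\) to the zero section, and by the chain rule \(T_x\rho\) inverts \(v+\eta\mapsto(v,\eta)\). In particular \(\rho\) has identity derivative on \(E\), which is the stated equivalent formulation.
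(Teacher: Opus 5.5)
The paper gives no proof of this statement; it only cites Libermann--Marle. Your route is the standard proof behind such a citation: restrict an exponential map to \(E\), compute \(T_{0_x}\tau(v,\eta)=v+\eta\), apply the inverse function theorem along the zero section, then shrink to get injectivity. The first three steps are correct.

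\textbf{The gap is in global injectivity.} This is the only nontrivial part when \(N\) is noncompact, which is the paper's case (\(N=\sigma(X)\) with \(X\) open), and as written it does not go through.

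\emph{The glued radius does not control collisions between charts.} If the radius on each relatively compact chart only guarantees injectivity of \(\tau\) over that chart, gluing such radii says nothing about \(\tau(\eta_x)=\tau(\eta'_{x'})\) when \(x,x'\) lie in different charts that are far apart in \(N\) but close in \(M\). The two arms of a thin U-shaped curve in \(\mathbb R^2\) are an example.

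\emph{The contradiction argument is ill-posed.} Once \(\delta\) is fixed, a failure of injectivity on \(W\) is a single pair, so there is no sequence along which \(\delta\to 0\). Also, ``nearby images force nearby base points'' holds only pointwise for the subspace topology. It cannot exclude colliding pairs that escape to infinity; those must be ruled out by how \(\delta\) is built. Even in \(\mathbb R^{2n}\), your inequality gives \(|x-x'|\le|\eta|+|\eta'|\), which is closeness in \(M\), not in \(N\).

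\textbf{How to repair it.} Make the local radius a cross radius.
\begin{itemize}
\item Take the fibre metric on \(E\) to be the restriction of \(g\), so that \(d_M(x,\tau(\eta_x))\le|\eta_x|\).
\item For compact \(K\subset N\), find \(r_K>0\) (small enough that the \(r_K\)-tube over \(K\) lies in \(W_1\)) with this property: \(\tau(\eta)=\tau(\eta')\), \(|\eta|,|\eta'|<r_K\), base point of \(\eta\) in \(K\), and base point \(x'\) of \(\eta'\) anywhere in \(N\) together force \(\eta=\eta'\).
\item Prove this by contradiction with radii \(1/j\). Pass to \(x_j\to a\in K\). Then \(\tau(\eta'_j)=\tau(\eta_j)\to a\), so \(x'_j\to a\) in \(M\), hence in \(N\) because \(N\) is embedded. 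This is exactly where embeddedness enters, and it excludes escape to infinity rather than being a separate case. So \(\eta_j,\eta'_j\to 0_a\), contradicting injectivity of \(\tau\) near \(0_a\).
\item Take compact sets \(K_\alpha\) forming a locally finite family whose interiors cover \(N\), a partition of unity with \(\operatorname{supp}\phi_\alpha\subset K_\alpha\), and set \(\delta=\sum_\alpha\phi_\alpha r_{K_\alpha}>0\).
\item Suppose \(\tau(\eta_x)=\tau(\eta'_{x'})\) with \(|\eta_x|<\delta(x)\), \(|\eta'_{x'}|<\delta(x')\), and without loss of generality \(\delta(x')\le\delta(x)\). Pick \(\alpha\) with \(\phi_\alpha(x)>0\) and \(r_{K_\alpha}\ge\delta(x)\). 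Then \(x\in K_\alpha\) and both norms are below \(r_{K_\alpha}\), so \(\eta_x=\eta'_{x'}\).
\end{itemize}

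Alternatively, simply cite the topological lemma you state. It is correct with the hypothesis that \(\tau\) restricted to the zero section is a homeomorphism onto its image, and citing it puts your proof at the same level of rigor as the paper's own citation.
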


We next prove the local canonical decomposition needed for the approximation
theorem. The argument is the standard
relative-Darboux proof of the symplectic neighborhood normal form, with all
hypotheses checked explicitly.

\begin{theorem}[Nonlinear canonical decomposition]
\label{thm:canonical_decomp}
Consider a \(2n\)-dimensional symplectic manifold \((M,\omega)\), let
\(\Omega\subset\mathbb R^{2k}\) be open, and let
\(\sigma:\Omega\to M\) be a smooth symplectic embedding. Let
\(\mathcal K\subset\Omega\) be compact and contained in a contractible open
subset of \(\Omega\).
Then there exist an open neighborhood \(V\) of \(\mathcal{S_C}(\mathcal K)\)
in \(\mathbb{R}^{2n}\), an open neighborhood \(U\) of
\(\sigma(\mathcal K)\) in \(M\), and a symplectomorphism
\(\varphi:V\to U\) such that
\begin{equation*}
    \sigma = \varphi \circ \mathcal{S_C}, \qquad\text{on }\mathcal K .
\end{equation*}
\end{theorem}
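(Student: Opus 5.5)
The plan is the standard symplectic neighborhood argument. We construct a diffeomorphism $\psi$ from a neighborhood of $\mathcal{S_C}(\mathcal K)$ onto a neighborhood of $\sigma(\mathcal K)$ with $\psi\circ\mathcal{S_C}=\sigma$ on $\mathcal K$, such that $\psi^*\omega$ agrees with $\omega_0$ along $\mathcal{S_C}(\mathcal K)$. \Cref{thm:relative_darboux} then corrects $\psi$ into a genuine symplectomorphism. Throughout, replace $\mathcal K$ by a contractible open set $\mathcal K'\subset\Omega$ containing it. Shrinking to a relatively compact contractible open set if needed, we may assume $\sigma|_{\mathcal K'}$ is an embedding with $N:=\sigma(\mathcal K')$ an embedded symplectic submanifold, since $\sigma^*\omega=\omega_{2k}$ is nondegenerate.

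\textbf{Step 1 (normal bundle trivialization).} Let $E:=TN^{\omega}$ be the symplectic orthogonal complement of $TN$ in $T_NM$. Because $\sigma^*\omega$ is nondegenerate, $E$ is complementary to $TN$, and $(E,\omega|_E)$ is a symplectic vector bundle of rank $2(n-k)$ over $N\cong\mathcal K'$. Since $\mathcal K'$ is contractible, $E$ is trivial as a symplectic vector bundle: symplectic bundles have structure group $\mathrm{Sp}(2(n-k))$, and any bundle over a contractible base is trivial. Choose a symplectic frame $e_1,\dots,e_{n-k},f_1,\dots,f_{n-k}$ of $E$ with $\omega(e_i,f_j)=\delta_{ij}$ and $\omega(e_i,e_j)=\omega(f_i,f_j)=0$.

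\textbf{Step 2 (tubular map).} Apply \Cref{thm:tubular_neighborhood} to $N$ with complement $E$. This gives $\tau:W\to U$ with $T_{0_x}\tau(v,\eta)=v+\eta$. Write points of $\mathbb R^{2n}$ near $\mathcal{S_C}(\mathcal K')$ as $\mathcal{S_C}(z)+\iota(a,b)$, where $z\in\mathcal K'$, $(a,b)\in\mathbb R^{2(n-k)}$ small, and $\iota$ places $a$ in the $q$-slots $k+1,\dots,n$ and $b$ in the $p$-slots $k+1,\dots,n$. Define
\[
\psi\bigl(\mathcal{S_C}(z)+\iota(a,b)\bigr):=\tau\Bigl(\textstyle\sum_i a_i e_i(\sigma(z))+b_i f_i(\sigma(z))\Bigr).
\]
Then $\psi\circ\mathcal{S_C}=\sigma$ on $\mathcal K'$. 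At points of $\mathcal{S_C}(\mathcal K')$, $D\psi$ sends the tangent directions to $D\sigma(z)$, which is symplectic onto $TN$, and sends the normal directions symplectically onto $E$. Since the normal coordinate space is $\omega_0$-orthogonal to $\operatorname{Im}S_c$ and $E=TN^\omega$, $\psi^*\omega=\omega_0$ at every point of $\mathcal{S_C}(\mathcal K')$. After shrinking the domain, $\psi$ is a diffeomorphism onto an open set by the inverse function theorem, combined with injectivity on the zero section and the standard compactness argument on a neighborhood of the compact set $\mathcal{S_C}(\mathcal K)$.

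\textbf{Step 3 (Darboux correction).} On the open set $V_0$ (the domain of $\psi$), the forms $\omega_0$ and $\omega_1:=\psi^*\omega$ are symplectic and agree along the embedded submanifold $\mathcal{S_C}(\mathcal K')$. \Cref{thm:relative_darboux} gives $\chi:V\to V'$ with $\chi|_{\mathcal{S_C}(\mathcal K')}=\mathrm{id}$ and $\chi^*\omega_1=\omega_0$. Set $\varphi:=\psi\circ\chi$. Then $\varphi^*\omega=\chi^*\psi^*\omega=\omega_0$, and $\varphi\circ\mathcal{S_C}=\psi\circ\mathcal{S_C}=\sigma$ on $\mathcal K$, with $U:=\psi(V')$.

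The main obstacle is Step 2: verifying that $\psi$ is a diffeomorphism on a genuine neighborhood of $\mathcal{S_C}(\mathcal K)$, and that the pullback identity holds with the correct cross terms, using $\omega_0$-orthogonality on the model side and $\omega$-orthogonality on the $M$ side. Triviality in Step 1 is where contractibility is used, so it must be invoked carefully on an open contractible set rather than on $\mathcal K$ itself.
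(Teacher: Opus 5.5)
Your proposal is correct and follows essentially the paper's route: trivialize the symplectic normal bundle using contractibility, build a tubular chart along which the two forms agree on the full tangent spaces at the zero section, then correct it with the relative Darboux--Weinstein theorem. The only difference is that you use the explicit linear tube $(z,a,b)\mapsto\mathcal{S_C}(z)+\iota(a,b)$ on the model side and do the Darboux correction in $\mathbb{R}^{2n}$, whereas the paper applies the tubular neighborhood theorem on both sides and corrects on the abstract bundle $W\subset N_0X$. Your $\psi$ is automatically a diffeomorphism onto an open set, being a composition of that linear chart, the frame isomorphism and $\tau$, so the inverse-function and compactness step you flag as the main obstacle is not needed; the shrinking to a relatively compact contractible set is not needed either.
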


\begin{proof}
Choose a contractible open set \(X\) such that
\(\mathcal K\subset X\subset\Omega\). Since \(X\) is an open subset of Euclidean space,
it is a second countable smooth manifold and is paracompact. Let
\[
    \omega_{\mathrm{std}}=\sum_{i=1}^n dq_i\wedge dp_i,
    \
    \omega_X=\sum_{i=1}^k dq_i\wedge dp_i;\
    (M_0,\omega_0)=(\mathbb R^{2n},\omega_{\mathrm{std}}),
    \
    (M_1,\omega_1)=(M,\omega).
\]
Define
\begin{equation}
\label{eq:canonical_embeddings}
    i_0=\mathcal S_C|_X:X\to\mathbb R^{2n},
    \qquad
    i_1=\sigma|_X:X\to M .
\end{equation}
Since \(S_c^\top J_{2n}S_c=J_{2k}\), the map \(i_0\) is a symplectic
embedding, and we have
\begin{equation}
\label{eq:canonical_source_forms}
    i_0^*\omega_0=i_1^*\omega_1=\omega_X .
\end{equation}

For \(j=0,1\), let \(N_jX\) be the symplectic normal bundle of \(i_j\).
The bundle
\(N_0X\to X\) is canonically symplectically trivial, with constant fiber
\[
    F_0=\{0\}^{k}\times\mathbb R^{n-k}
        \times\{0\}^{k}\times\mathbb R^{n-k}.
\]
Since \(X\) is contractible and paracompact, the symplectic frame bundle of
\(N_1X\to X\) is a principal
\(\mathrm{Sp}(2(n-k),\mathbb R)\)-bundle over a contractible paracompact
base, and is therefore trivial by the standard triviality theorem for
principal bundles over contractible paracompact spaces
\cite{husemoller1966fibre}. Thus \(N_1X\) is symplectically trivial. Choosing
symplectic trivializations of \(N_0X\) and \(N_1X\), we obtain a symplectic
vector-bundle isomorphism
\begin{equation}
\label{eq:canonical_normal_isomorphism}
    \Psi:N_0X\to N_1X
\end{equation}
covering the identity on \(X\).

Set \(E=N_0X\). Regard \(E\) as a vector subbundle of \(i_j^*TM_j\) by
\begin{equation}
\label{eq:canonical_normal_inclusions}
    \iota_0:E=N_0X\hookrightarrow i_0^*TM_0,
    \qquad
    \iota_1:E\xrightarrow{\Psi}N_1X\hookrightarrow i_1^*TM_1 .
\end{equation}
Because \(\Psi\) is symplectic, the normal parts satisfy, for all
\(\eta,\zeta\in E_x\),
\begin{equation}
\label{eq:canonical_normal_forms_agree}
    \omega_0(i_0(x))(\iota_0(\eta),\iota_0(\zeta))
    =
    \omega_1(i_1(x))(\iota_1(\eta),\iota_1(\zeta)).
\end{equation}

The maps \(i_0\) and \(i_1\) are embeddings, so \(i_j(X)\subset M_j\) are
embedded submanifolds, identified with \(X\). The bundles \(\iota_j(E)\) are
smooth complements of \(d i_j(TX)\) in \(i_j^*TM_j\). Therefore
\Cref{thm:tubular_neighborhood} applies to \(i_j(X)\subset M_j\) with the
chosen complements \(\iota_j(E)\). Thus there are neighborhoods \(W_j\) of the
zero section in \(E\) and embeddings \(\tau_j:W_j\to M_j\), diffeomorphisms
onto open neighborhoods of \(i_j(X)\), such that
\begin{equation}
\label{eq:canonical_tubular_charts}
    \tau_j(0_x)=i_j(x),
    \qquad
    T_{0_x}\tau_j(v,\eta)=d i_j(v)+\iota_j(\eta),
    \qquad v\in T_xX,\ \eta\in E_x .
\end{equation}

After replacing \(W_0\) and \(W_1\) by their intersection, let \(W\) be a
common open neighborhood of the zero section in \(E\), and define
\begin{equation}
\label{eq:canonical_pulled_forms}
    \widetilde\omega_j=\tau_j^*\omega_j,\qquad j=0,1,
\end{equation}
on \(W\). For \(v,w\in T_xX\) and \(\eta,\zeta\in E_x\), the splitting
\(T_{0_x}E\simeq T_xX\oplus E_x\) and
\eqref{eq:canonical_tubular_charts} give
\[
\begin{aligned}
    \widetilde\omega_j(0_x)((v,\eta),(w,\zeta))
    &=
    \omega_j(i_j(x))
    \bigl(d i_j(v)+\iota_j(\eta),
          d i_j(w)+\iota_j(\zeta)\bigr).
\end{aligned}
\]
The mixed terms vanish because \(\iota_j(E)=N_jX\) is the
\(\omega_j\)-orthogonal complement of \(d i_j(TX)\). The tangent terms agree
by \eqref{eq:canonical_source_forms}, and the normal terms agree by
\eqref{eq:canonical_normal_forms_agree}. Hence
\begin{equation}
\label{eq:canonical_zero_section_agreement}
    \widetilde\omega_0(0_x)=\widetilde\omega_1(0_x)
    \qquad\text{as bilinear forms on }T_{0_x}E,\quad x\in X .
\end{equation}
This is the strong equality on the zero section required by
\Cref{thm:relative_darboux}, not merely equality after restriction to
\(T_xX\).

Apply \Cref{thm:relative_darboux} to the even-dimensional manifold \(W\), the
zero section \(X\subset W\), and the symplectic forms
\(\widetilde\omega_0\) and \(\widetilde\omega_1\). By
\eqref{eq:canonical_zero_section_agreement}, its hypotheses are satisfied.
Thus there exist neighborhoods \(W'\) and \(W''\) of the zero section in \(W\)
and a diffeomorphism \(\chi:W'\to W''\) such that
\begin{equation}
\label{eq:canonical_darboux_correction}
    \chi|_X=\operatorname{id}_X,
    \qquad
    \chi^*\widetilde\omega_1=\widetilde\omega_0 .
\end{equation}
Again, \Cref{thm:relative_darboux} does not require \(X\) to be compact:
it produces open neighborhoods of the zero section over all of \(X\). These
neighborhoods therefore contain the compact subset \(\{0_z:z\in\mathcal K\}\).

Set
\begin{equation}
\label{eq:canonical_varphi_def}
    V=\tau_0(W'),\qquad U=\tau_1(W''),\qquad
    \varphi=\tau_1\circ\chi\circ\tau_0^{-1}.
\end{equation}
Then \(V\) is an open neighborhood of \(\mathcal S_C(\mathcal K)\) in
\(\mathbb R^{2n}\), because \(\tau_0(0_z)=i_0(z)=\mathcal S_C(z)\) for
\(z\in\mathcal K\). Similarly, \(U\) is an open neighborhood of
\(\sigma(\mathcal K)\) in \(M\), because
\(\tau_1(0_z)=i_1(z)=\sigma(z)\). Since \(\tau_0\), \(\chi\), and \(\tau_1\)
are diffeomorphisms onto their images, \(\varphi:V\to U\) is a
diffeomorphism. Moreover,
\[
\begin{aligned}
    \varphi^*\omega_1
    &=(\tau_0^{-1})^*\chi^*(\tau_1^*\omega_1)
        &&\text{by the definition of \(\varphi\) in
        \eqref{eq:canonical_varphi_def}}  \\
    &=(\tau_0^{-1})^*\chi^*\widetilde\omega_1
        &&\text{by \eqref{eq:canonical_pulled_forms}}  \\
    &=(\tau_0^{-1})^*\widetilde\omega_0
        &&\text{by \eqref{eq:canonical_darboux_correction}}  \\
    &=(\tau_0^{-1})^*(\tau_0^*\omega_0)
        &&\text{by \eqref{eq:canonical_pulled_forms}}  \\
    &=\omega_0|_V
        &&\text{because \(\tau_0:W'\to V\) is a diffeomorphism.}
\end{aligned}
\]
Thus \(\varphi\) is a symplectomorphism from \((V,\omega_0|_V)\) to
\((U,\omega_1|_U)\), by \Cref{def:symplectomorphism}.
Therefore, for every \(z\in\mathcal K\),
\[
\begin{aligned}
    \varphi(\mathcal S_C(z))
    &=\varphi(i_0(z))
        &&\text{by \(i_0=\mathcal S_C|_X\) in
        \eqref{eq:canonical_embeddings}}  \\
    &=\tau_1\!\left(\chi\!\left(\tau_0^{-1}(i_0(z))\right)\right)
        &&\text{by the definition of \(\varphi\) in
        \eqref{eq:canonical_varphi_def}}  \\
    &=\tau_1(\chi(0_z))
        &&\text{because \(\tau_0(0_z)=i_0(z)\), by
        \eqref{eq:canonical_tubular_charts}}  \\
    &=\tau_1(0_z)
        &&\text{because \(\chi|_X=\operatorname{id}_X\), by
        \eqref{eq:canonical_darboux_correction}}  \\
    &=i_1(z)
        &&\text{because \(\tau_1(0_z)=i_1(z)\), by
        \eqref{eq:canonical_tubular_charts}}  \\
    &=\sigma(z)
        &&\text{by \(i_1=\sigma|_X\) in
        \eqref{eq:canonical_embeddings}.}
\end{aligned}
\]
Hence
\(\sigma=\varphi\circ\mathcal S_C\) on \(\mathcal K\), as required.
\end{proof}

Since \(\varphi\) is a symplectomorphism, we recall the following approximation
theorem for symplectomorphisms, due to Turaev~\cite{turaev2003polynomial}.

\begin{lemma}[Approximation of symplectomorphisms]
\label{thm:approximation_of_symplectomorphisms}
Given any \(C^r\)-smooth symplectic diffeomorphism
\(\varphi:U\to\varphi(U)\subset\mathbb R^{2n}\), for any compact set
\(C\subset U\) and any \(\varepsilon>0\), there exist polynomial symplectic
shear maps \(S_1,\ldots,S_L\) such that
\[
    \|\varphi-S_L\circ\cdots\circ S_1\|_{C^r(C)}<\varepsilon .
\]
Here, in canonical coordinates
\((q,p)\in \mathbb{R}^n\times\mathbb{R}^n\), a polynomial symplectic shear map
is a map of one of the following forms:
\[
    (q,p)\mapsto (q,p+\nabla P(q)),
    \qquad
    (q,p)\mapsto (q+\nabla Q(p),p),
\]
where \(P\) and \(Q\) are polynomials.
\end{lemma}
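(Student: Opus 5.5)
The plan is to reduce $\varphi$ to the time-one map of a Hamiltonian flow, approximate the Hamiltonian by polynomials, and then split that flow into explicitly integrable pieces, each of which is a polynomial shear up to linear symplectic conjugation.

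\textbf{Step 1: linear normalization and isotopy.} Fix $x_0$ and compose $\varphi$ with the linear symplectic map $D\varphi(x_0)^{-1}$. By the unit triangular factorization used in the proof of \Cref{thm:Factorization of linear}, every element of $\mathrm{Sp}(2n,\mathbb R)$ is a finite product of linear shears. A linear shear $\begin{pmatrix}I&S\\0&I\end{pmatrix}$ is the polynomial shear generated by the quadratic $Q(p)=\tfrac12p^\top Sp$. It therefore suffices to treat $\psi=D\varphi(x_0)^{-1}\circ\varphi$ up to a translation, which is itself a shear with linear potential.

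Next I will connect $\psi$ to the identity through symplectic maps defined near $C$ by the Alexander-type rescaling
\[
\psi_t(x)=\tfrac1t\bigl(\psi(x_0+t(x-x_0))-\psi(x_0)\bigr).
\]
Each $\psi_t$ is symplectic, since the input is scaled by $t$ and the output by $1/t$. As $t\to0$, $\psi_t$ tends to $D\psi(x_0)(x-x_0)=x-x_0$, up to translation.

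\textbf{Step 2: Hamiltonian generator.} The vector field $\dot\psi_t\circ\psi_t^{-1}$ is symplectic. On a simply connected neighbourhood it is Hamiltonian, $X_t=J\nabla H_t$. I will multiply $H_t$ by a cutoff equal to $1$ near the swept region, which gives a compactly supported $H_t$ on $\mathbb R^{2n}$. I will then replace it by a polynomial $\tilde H_t$ that is $C^{r+1}$-close on a large ball (Weierstrass) and piecewise constant in $t$. Gronwall-type estimates for the flows in $C^r$ show that the time-one map of $\tilde H$ is $\varepsilon/2$-close to $\psi$ on $C$.

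\textbf{Step 3: exact splitting into shears.} Every homogeneous polynomial is a linear combination of powers $\ell(x)^m$ of linear forms $\ell(x)=a^\top x$. The flow of $c\,\ell^m$ is explicit, because $\ell$ is conserved ($a^\top Ja=0$):
\[
x\mapsto x+tcm\,\ell(x)^{m-1}Ja .
\]
Choose a linear symplectic $B$ with $\ell\circ B$ depending only on $q$. Then this flow equals $B\circ\mathcal S\circ B^{-1}$, where $\mathcal S$ is a polynomial shear in $p$. By Step 1, $B$ and $B^{-1}$ are finite products of shears.

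The flow of the sum $\tilde H=\sum_j c_j\ell_j^{m_j}$ over each short time interval is then approximated by the Lie--Trotter product of these exact sub-flows. The product converges in $C^r$ on compacta with error $O(1/N)$ as the number of substeps $N\to\infty$. Collecting all the pieces produces $S_L\circ\cdots\circ S_1$ with the required $C^r(C)$ accuracy.

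\textbf{Main obstacle.} I expect Step 1 to be the main obstacle: the rescaling isotopy requires the segments from $x_0$ to points of $C$ to stay in $U$, i.e.\ star-shapedness. Likewise, Step 2 needs the symplectic vector field to be Hamiltonian, i.e.\ vanishing $H^1$ of the swept region. For a general compact $C\subset U$ I would first try to handle this by choosing a neighbourhood of $C$ inside $U$ on which a symplectic isotopy to a linear map exists. In our application this is reasonable, because the domain $V$ from \Cref{thm:canonical_decomp} is a tubular neighbourhood over a contractible base, so it can be taken contractible. Failing that, I would fall back on Turaev's original argument for the general case.
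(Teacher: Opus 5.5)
\textbf{Gap: the lemma is false for general \(U\).} The obstacle you flag at the end cannot be closed, and there is no ``general case'' in Turaev to fall back on. For an arbitrary open \(U\) the lemma is false, so the paper's one-line citation has the same defect as your proposal.

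Every composition of polynomial shears is a Hamiltonian diffeomorphism of all of \(\mathbb R^{2n}\), so it preserves the symplectic area of any disc it moves. A symplectic map of a non-simply-connected \(U\) need not. Concretely:
\begin{itemize}
\item Let \(n=1\), \(U=\{1<|x|<3\}\), and in polar coordinates \(\varphi(r,\theta)=(\sqrt{r^2+1},\theta)\). Since \(\varphi^*(r\,dr\wedge d\theta)=r\,dr\wedge d\theta\), this is a \(C^\infty\) symplectic diffeomorphism onto its image.
\item Take \(C=\{3/2\le|x|\le5/2\}\), and let \(\Psi\) be a composition of shears with \(\sup_C|\varphi-\Psi|<\delta\).
\item \(\Psi(\{|x|<2\})\) has area \(4\pi\) and is the interior of the Jordan curve \(\Psi(\{|x|=2\})\).
\item On \(\{|x|=2\}\), \(\varphi\) traces the circle of radius \(\sqrt5\). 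The straight-line homotopy between \(\Psi\) and \(\varphi\) there avoids every point of modulus less than \(\sqrt5-\delta\). So the interior of \(\Psi(\{|x|=2\})\) contains the disc of radius \(\sqrt5-\delta\).
\item Hence \(\pi(\sqrt5-\delta)^2\le 4\pi\), i.e. \(\delta\ge\sqrt5-2\), already in \(C^0\).
\end{itemize}
The lemma therefore needs an extra hypothesis. The natural one is exactly what your Step~1 uses: \(C\) lies in an open subset of \(U\) that is star-shaped about some \(x_0\). Under it, the images of that set under \(\psi_t\) are contractible and the generator in Step~2 is Hamiltonian. Your Steps~1--3 then form a correct proof and a self-contained replacement for the citation: Alexander isotopy, cut-off and polynomial approximation, exact \(\ell^m\)-flows as linearly conjugated shears, and Lie--Trotter splitting.

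\textbf{Two further points.}

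First, contractibility of \(V\) in \Cref{thm:canonical_decomp} removes only the flux issue in Step~2. It does not produce the isotopy of Step~1, which needs the segments \([x_0,x]\), \(x\in C\), to lie in \(U\). For the paper's application you therefore need \(\mathcal D\) to lie in a star-shaped open set, not merely a contractible one. In that case take \(\tau_0(z,\eta)=S_cz+\eta\), a relatively compact star-shaped \(X'\supset\mathcal D\), and small \(\delta\). Shrinking \(V\) to \(S_c(X')+\{\eta\in F_0:|\eta|<\delta\}\) then gives a star-shaped domain. Without star-shapedness you need a separate isotopy argument, which you do not supply.

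Second, Step~2 loses a derivative. For \(\psi\in C^r\), the generator \(\dot\psi_t\circ\psi_t^{-1}\) is only \(C^{r-1}\), so \(H_t\in C^r\) and a \(C^{r+1}\)-close polynomial is not available. As written you get only \(C^{r-1}\) accuracy. This is harmless in the paper's application, where \(\varphi\) is \(C^\infty\). For the full \(C^r\) claim:
\begin{itemize}
\item split \(\psi\) along the isotopy into \(C^r\)-near-identity factors;
\item represent each factor by a \(C^{r+1}\) generating function and approximate that function by a polynomial;
\item this yields smooth factors joined to the identity by smooth Hamiltonian isotopies, to which your Steps~2--3 apply.
\end{itemize}
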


\begin{proof}
This is the intermediate approximation result established in the proof of
\cite[Theorem~2]{turaev2003polynomial}.
\end{proof}

We are now ready to prove \Cref{thm:Approximation of nonlinear}.
\begin{proof}[Proof of \Cref{thm:Approximation of nonlinear}]
By \Cref{thm:canonical_decomp}, applied with \(\mathcal K=\mathcal D\), there
exist an open set \(V\subset\mathbb R^{2n}\) containing
\(\mathcal S_C(\mathcal D)\) and a symplectomorphism
\(\phi:V\to\phi(V)\) such that
\(\sigma(z)=\phi(\mathcal S_C(z))\) for \(z\in\mathcal D\).
Since \(\mathcal S_C(\mathcal D)\) is compact and \(V\) is open, choose a
compact set \(K\) such that
\(\mathcal S_C(\mathcal D)\subset K\subset V\). Applying
\Cref{thm:approximation_of_symplectomorphisms} to \(\phi\) on \(K\) in the
\(C^r\)-topology for any fixed \(r\ge 1\), and retaining only the \(C^0\)
estimate, we obtain that, for every \(\varepsilon>0\), there exist
polynomial symplectic shear maps
\(S_1,\ldots,S_L\) such that
\[
    \sup_{x\in K}
    \|\phi(x)-S_L\circ\cdots\circ S_1(x)\|<\varepsilon .
\]
Here each \(S_i\) has the form
\((q,p)\mapsto(q+\nabla V_i(p),p)\) or
\((q,p)\mapsto(q,p+\nabla V_i(q))\), for some polynomial potential
\(V_i:\mathbb R^n\to\mathbb R\). Inserting identity shears generated by the
zero potential if necessary, we may assume that the two shear types alternate
according to the parity convention in the statement. Finally, since
\(\mathcal S_C(\mathcal D)\subset K\) and
\(\sigma=\phi\circ \mathcal S_C\) on \(\mathcal D\), we conclude that
\[
\begin{aligned}
    \|\sigma-S_L\circ\cdots\circ S_1\circ \mathcal S_C\|_{C^0(\mathcal D)}
    &=
    \sup_{z\in\mathcal D}
    \|\phi(\mathcal S_C(z))-S_L\circ\cdots\circ S_1(\mathcal S_C(z))\| \\
    &\le
    \sup_{x\in K}
    \|\phi(x)-S_L\circ\cdots\circ S_1(x)\|
    <\varepsilon.
\end{aligned}
\]
\end{proof}

\subsection{Proof of \Cref{prop:NN_approx_symp_lift}}
\begin{proof}[Proof of \Cref{prop:NN_approx_symp_lift}]
By \Cref{thm:Approximation of nonlinear}, there exist
\(L\in\mathbb N\) and polynomial, hence smooth, potentials
\(V_1,\ldots,V_L:\mathbb R^n\to\mathbb R\) such that
\begin{equation}\label{eq:estimate1}
\|\sigma-\widetilde\sigma_L\|_{C^0(\mathcal D)}
    <\frac{\varepsilon}{2},\quad \widetilde\sigma_L
    :=
    \mathcal S_L\circ\cdots\circ\mathcal S_1\circ \mathcal{S_C}.
\end{equation}
Note that each smooth shear map can be uniformly approximated on
compact sets by a neural shear map of the same type. Using Lemma 1 in \cite{zhu2022approximation} we obtain that \(\widetilde\sigma_L\) can be uniformly approximated by $\sigma_\theta$.
Applying such result to the compact set
\(\mathcal{S_C}(\mathcal D)\) with accuracy \(\varepsilon/2\), we obtain
neural shear maps such that the corresponding
neural embedding \(\sigma_\theta\) satisfies
\(
    \|\widetilde\sigma_L-\sigma_\theta\|_{C^0(\mathcal D)}
    <\frac{\varepsilon}{2}.
\)
Combining this estimate with \eqref{eq:estimate1} gives
\[
    \|\sigma-\sigma_\theta\|_{C^0(\mathcal D)}
    \le
    \|\sigma-\widetilde\sigma_L\|_{C^0(\mathcal D)}
    +
    \|\widetilde\sigma_L-\sigma_\theta\|_{C^0(\mathcal D)}
    <\varepsilon,
\]
which concludes the proof.
\end{proof}

\section{Numerical results}\label{sec:experiments}
In this section, we evaluate the proposed SpAE framework on three representative
high-dimensional Hamiltonian systems: a 1D crystal-lattice model for
classical molecular dynamics, a multi-particle model of tokamak electromagnetic
dynamics, and a two-stream instability model. These benchmarks cover both lattice
and particle Hamiltonian dynamics and are used to assess the ability of SpAE to
learn low-dimensional nonlinear symplectic representations, achieve accurate
reconstruction relative to existing reduction methods, and support long-horizon
prediction after learning the reduced dynamics in the latent space.

For the numerical implementation of SpAE, we parameterize the scalar potential \(V_i\) in each shear layer by a single-hidden-layer neural network,
\begin{equation*}
    V_i(x)
    =
    a_i^{\top}\rho(K_i x+b_i),
    \qquad x\in\mathbb{R}^n .
\end{equation*}
Here \(K_i\in\mathbb{R}^{w_i\times n}\), \(a_i,b_i\in\mathbb{R}^{w_i}\)
are trainable parameters, \(w_i\) is the width of the \(i\)-th potential
network, and \(\rho\) is an activation function applied componentwise. We have
\[
    \nabla V_i(x)
    =
    K_i^{\top}\operatorname{diag}(a_i)\rho'(K_i x+b_i),
\]
which is exactly the gradient-module form of G-SympNets
\cite{Jin2020SympNets}. After learning the latent representation, we employ an HNN~\cite{greydanus2019hamiltonian} to model the reduced Hamiltonian dynamics for SpAE and the common reduction baselines. We compare SpAE with WSAE, H\'{e}non-ROM, COT, cSVD, NLP, and POD.
The linear symplectic reduction baselines COT, cSVD, and NLP are associated with the proper symplectic decomposition framework~\cite{peng2016symplectic}, whereas POD~\cite{kunisch2002galerkin} serves as a classical non-symplectic linear baseline. Among the nonlinear baselines, weakly symplectic autoencoder (WSAE)~\cite{buchfink2023symplectic} augments the reconstruction objective with a penalty on the decoder-Jacobian residual of the canonical symplecticity condition, so symplecticity is not guaranteed globally by the architecture.
We additionally evaluate H\'{e}non-ROM~\cite{chen2025reduced}, which combines a H\'{e}nonNet-based symplectic encoder--decoder augmented with G-reflector layers and a H\'{e}nonNet latent flow map.

For each system, multiple trajectories are generated by applying independent Gaussian perturbations with amplitude $\epsilon$ to a common initial condition and are divided into training, validation, and one blind target trajectory. The training and validation trajectories are used for model fitting and selection, respectively. The blind target is excluded from reducer and latent-dynamics training, normalization, hyperparameter tuning, and model selection, and is used only to evaluate reconstruction and long-horizon prediction. All tabulated errors are computed over the complete target trajectory: the absolute error is the Frobenius norm of the difference between the target and reconstructed or predicted data matrices, and the relative error normalizes it by the target-data norm. The figures report per-snapshot $L_2$ errors on the same trajectory. All neural networks are implemented in PyTorch and trained with Adam, with the principal settings summarized in \Cref{tab:exp_params}.

\begin{table}[htbp]
\centering
\caption{Experimental settings for the three Hamiltonian systems.}
\label{tab:exp_params}
\small
\setlength{\tabcolsep}{4pt}
\renewcommand{\arraystretch}{1.22}
\begin{tabularx}{0.8\textwidth}{
    >{\raggedright\arraybackslash}p{0.24\textwidth}
    YYY
}
\toprule
Parameter
& Crystal lattice
& Tokamak
& Two-stream \\
\midrule

Dimension reduction
& $\mathbb{R}^{1000}\to\mathbb{R}^{8}$
& $\mathbb{R}^{3000}\to\mathbb{R}^{6}$
& $\mathbb{R}^{2000}\to\mathbb{R}^{10}$ \\

Trajectory split
& $80/10/1$
& $52/3/1$
& $40/8/1$ \\

Snapshots
& $1000$
& $5000$
& $1001$ \\

Snapshot indices
& $0$--$999$
& $0$--$4999$
& $0$--$1000$ \\

Time step
& $\Delta t=0.01$
& $\Delta t=0.01$
& $\Delta t=0.01$ \\

\midrule

SpAE reducer
& $8\times192$ / sigmoid
& $6\times256$ / tanh
& $8\times256$ / sigmoid \\

WSAE hidden widths
& $512/256/128$
& $512/256/128$
& $768/384/192$ \\

WSAE activation / $\alpha$
& ELU / $0.50$
& ELU / $0.50$
& ELU / $0.70$ \\

SpAE latent HNN
& $6\times128$ / tanh
& $6\times128$ / tanh
& $6\times160$ / tanh \\

Latent integrator
& SRK4
& SRK4
& SRK4 \\

\bottomrule
\end{tabularx}

\vspace{0.4ex}
\begin{minipage}{0.94\textwidth}
\raggedright
\footnotesize
SRK4 denotes the implicit fourth-order symplectic Runge--Kutta scheme.
\end{minipage}
\end{table}

\subsection{Crystal Lattice Model of Classical Molecular Dynamics}

We consider a 1D nonlinear crystal lattice model consisting of $N$ particles with periodic boundary conditions \cite{bajars2023dimensionality,bajars2025structure}. The dynamics of the system are governed by the dimensionless and separable Hamiltonian, which represents the total energy as the sum of kinetic energy, on-site potential energy, and nearest-neighbor interaction potential energy \cite{archilla2023spectral,archilla2019pterobreathers,bajars2022data,dou2010breathers}:
\begin{equation}
    H(\mathbf{q}, \mathbf{p}) = \sum_{n=0}^{N-1} \left( \frac{1}{2} p_n^2 + U(q_n) + V(r_n) \right),
    \label{eq:lattice-hamiltonian}
\end{equation}
where $q_n$ and $p_n$ denote the displacement from the equilibrium position and the momentum of the $n$-th particle, respectively. The term $r_n = 1 + q_{n+1} - q_n$ represents the distance between neighboring particles. The on-site potential $U(q_n)$ is modeled by the periodic Frenkel-Kontorova potential \cite{braun1998nonlinear}; and the interaction between adjacent particles is described by the scaled Lennard-Jones potential
\begin{equation}
    U(q_n) = U_0 \left( 1 - \cos(2\pi q_n) \right),
\quad
    V(r_n) = V_0 \left[ \left( \frac{\eta}{r_n} \right)^{12} - 2 \left( \frac{\eta}{r_n} \right)^{6} \right],
\end{equation}
where $U_0$ is the depth of the on-site potential well; $\eta$ is the scaled lattice constant and $V_0$ determines the interaction strength. This highly nonlinear system supports the existence of Discrete Breathers (DBs), which are spatially localized and time-periodic excitations.

\begin{figure}[htbp]
    \centering
    \includegraphics[width=\linewidth]{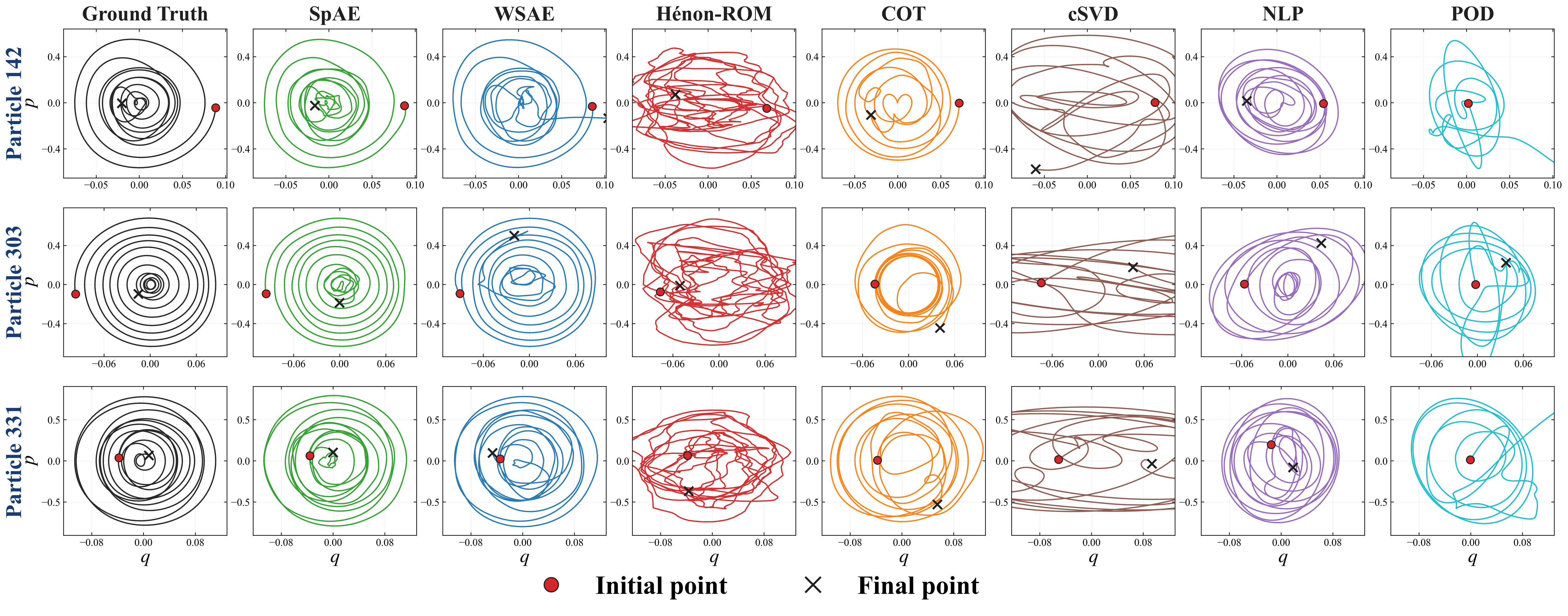}
    \caption{Phase-space portraits of particles 142, 303, and 331 on the blind crystal lattice trajectory. Red dots and black crosses mark the initial and final states, respectively.}
    \label{fig:latice trajectory}
\end{figure}

\begin{table}[htbp]
\centering
\caption[Errors on the Blind Crystal Lattice Trajectory]
{Reconstruction and Prediction Errors on the Blind Crystal Lattice Trajectory}
\label{tab:lattice_errors}
\resizebox{0.8\textwidth}{!}{
\begin{tabular}{lcccc}
\toprule
\multirow{2}{*}{Method}
& \multicolumn{2}{c}{Reconstruction (Stage 1)}
& \multicolumn{2}{c}{Full Prediction (Stage 2)} \\
\cmidrule(lr){2-3} \cmidrule(lr){4-5}
& Absolute Error
& Relative Error
& Absolute Error
& Relative Error \\
\midrule
SpAE (Ours)
& $9.1169 \times 10^{0}$
& $4.4341 \times 10^{-2}$
& $\mathbf{3.6666 \times 10^{1}}$
& $\mathbf{1.7833 \times 10^{-1}}$ \\

WSAE
& $\mathbf{9.0470 \times 10^{0}}$
& $\mathbf{4.4001 \times 10^{-2}}$
& $1.6298 \times 10^{2}$
& $7.9266 \times 10^{-1}$ \\

H\'{e}non-ROM
& $9.3606 \times 10^{0}$
& $4.5526 \times 10^{-2}$
& $2.1661 \times 10^{2}$
& $1.0535 \times 10^{\phantom{-}0}$ \\

COT
& $1.2642 \times 10^{2}$
& $6.1484 \times 10^{-1}$
& $2.3106 \times 10^{2}$
& $1.1238 \times 10^{\phantom{-}0}$ \\

cSVD
& $1.1803 \times 10^{2}$
& $5.7405 \times 10^{-1}$
& $2.3473 \times 10^{2}$
& $1.1416 \times 10^{\phantom{-}0}$ \\

NLP
& $4.4166 \times 10^{1}$
& $2.1481 \times 10^{-1}$
& $1.7404 \times 10^{2}$
& $8.4648 \times 10^{-1}$ \\

POD
& $3.5410 \times 10^{1}$
& $1.7222 \times 10^{-1}$
& $5.9601 \times 10^{2}$
& $2.8987 \times 10^{\phantom{-}0}$ \\
\bottomrule
\end{tabular}
}
\end{table}

\begin{figure}[htbp]
    \centering
    \includegraphics[width=0.8\linewidth]{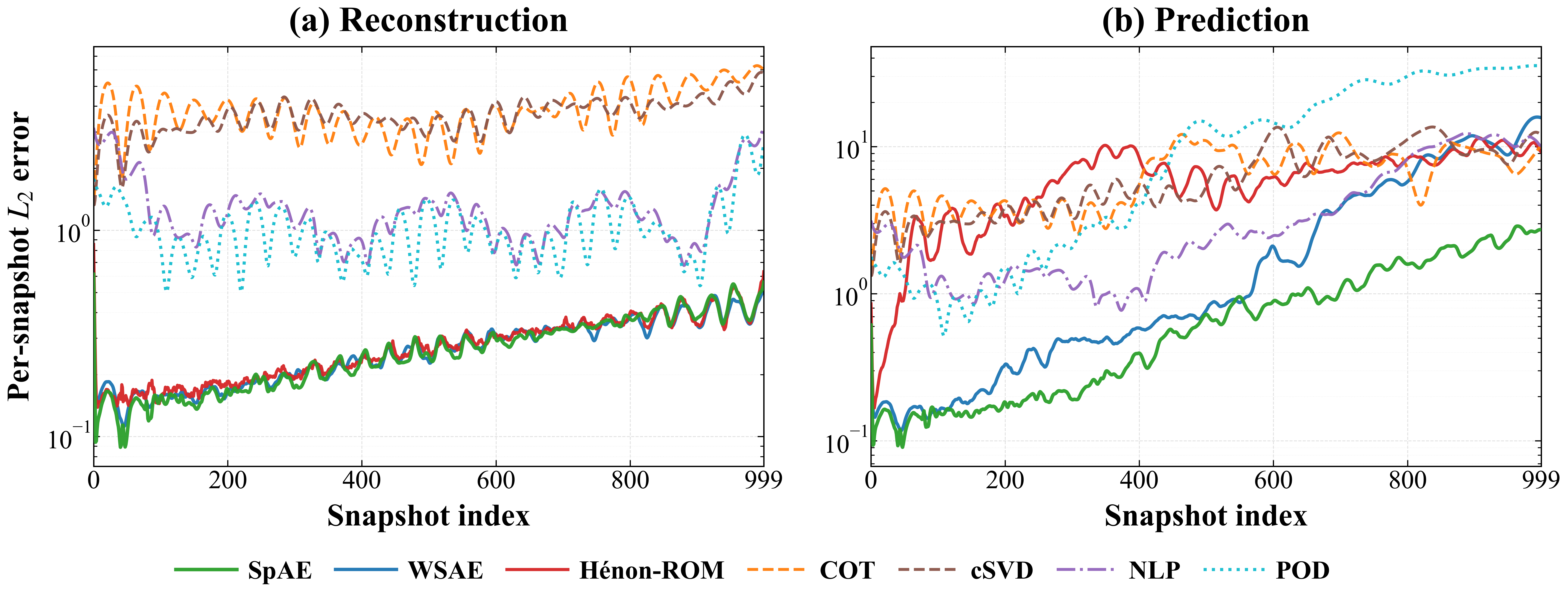}
    \caption{Per-snapshot $L_2$ errors on a logarithmic scale for the blind crystal lattice trajectory. Panel (a) shows the reconstruction error, and panel (b) shows the full-prediction error.}
    \label{fig:lattice error}
\end{figure}

We simulate a one-dimensional crystal lattice of $N=500$ particles, yielding a phase space in $\mathbb{R}^{1000}$. Perturbations of amplitude $\epsilon=10^{-3}$ generate 91 trajectories, each containing 1000 snapshots with $\Delta t=0.01$. The state is reduced to $2k=8$, and the trajectories are split into 80 for training, 10 for validation, and one blind target for final evaluation.

For this experiment only, we additionally evaluate H\'{e}non-ROM~\cite{chen2025reduced}. Its reducer consists of two width-$300$ H\'{e}non layers followed by four G-reflector layers, while its latent flow uses two width-$30$ H\'{e}non layers. All scalar-potential networks use ELU activations.

The reconstruction results in \Cref{tab:lattice_errors} and the reconstruction panel of \cref{fig:lattice error} show that SpAE, WSAE, and H\'{e}non-ROM achieve similarly accurate compression from $\mathbb{R}^{1000}$ to $\mathbb{R}^{8}$. The three nonlinear reducers consistently outperform all linear baselines: their errors are more than one order of magnitude lower than those of COT and cSVD and are also clearly lower than those of POD and NLP. The results indicate that nonlinear reduction is more effective than linear methods in capturing the underlying low-dimensional structure.

The prediction results in \Cref{tab:lattice_errors} and the prediction panel of \cref{fig:lattice error} show that the latent-dynamics model trained on the representation learned by SpAE yields the lowest long-horizon prediction error among all tested reduction methods. Its prediction error is more than one order of magnitude lower than that obtained with POD and is also lower than those obtained with the other baselines. In \cref{fig:latice trajectory}, the trajectories predicted using the SpAE representation agree most closely with the ground-truth trajectories, whereas those obtained using the other representations exhibit larger deviations.

\subsection{Charged Particles in Tokamak Electromagnetic Fields}

Charged-particle motion in tokamak magnetic confinement exhibits rapid gyromotion in a spatially nonuniform axisymmetric magnetic field~\cite{zhang2016explicit,tao2016explicit}. We consider an ensemble of noninteracting particles evolving in a prescribed, time-independent axisymmetric magnetic field and a static electrostatic field.

In normalized units, an ensemble of $N$ charged particles with canonical positions $\boldsymbol q_i=(x_i,y_i,z_i)$ and momenta $\boldsymbol p_i$, for $i=1,\ldots,N$, is governed by the total Hamiltonian
\begin{equation}
H(\boldsymbol q,\boldsymbol p)
=
\sum_{i=1}^{N}
\left[
\frac{1}{2}
\left\lVert
\boldsymbol p_i-\boldsymbol A(\boldsymbol q_i)
\right\rVert_2^2
+
\Phi(\boldsymbol q_i)
\right].
\label{eq:tokamak-hamiltonian}
\end{equation}
The magnetic vector potential is
\begin{equation}
\boldsymbol{A}(x,y,z)
=
\left(
\frac{2xz-y\rho_{\mathrm m}^2}{4R^2},
\frac{2yz+x\rho_{\mathrm m}^2}{4R^2},
-\frac{1}{4}\ln R^2
\right),
\
R=\sqrt{x^2+y^2},
\
\rho_{\mathrm m}^2=(R-R_{\mathrm m})^2+z^2,
\label{eq:tokamak-vector-potential}
\end{equation}
with magnetic-axis major radius $R_{\mathrm m}=1$. The experiment includes the static coupled quartic electric potential
$
\Phi(x,y,z)
=\frac{k_2}{2}\rho_{\Phi}^2
+\frac{k_4}{4}\rho_{\Phi}^4
+k_cXyz,
$
where $X=x-R_{\Phi}$, $\rho_{\Phi}^2=X^2+y^2+z^2$, $R_{\Phi}=1.25$, $k_2=0.05$, $k_4=0.20$, and $k_c=0.05$. Since $\Phi$ is time independent, the continuous system is autonomous and conserves the total Hamiltonian along exact trajectories.

\begin{figure}[htbp]
    \centering
    \includegraphics[width=1\linewidth]{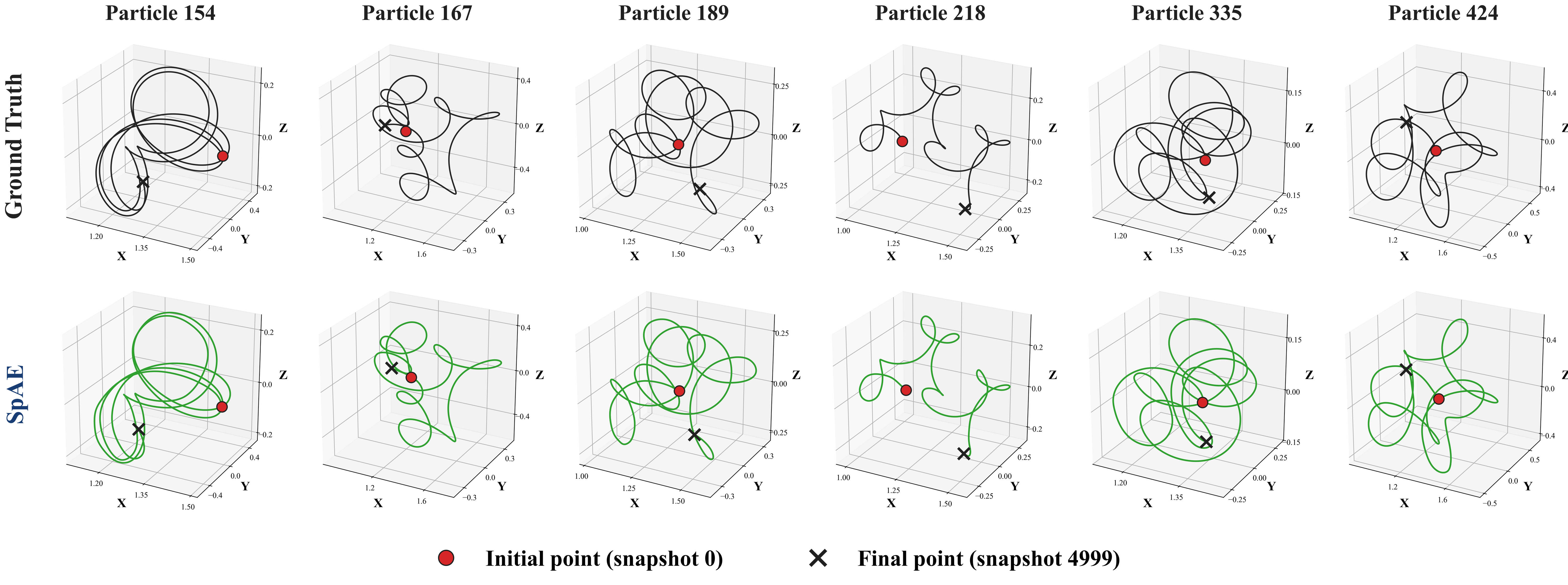}
\caption{Three-dimensional position trajectories of several particles over all 5,000 snapshots of the blind Tokamak trajectory. Red dots and black crosses mark snapshots 0 and 4999, respectively.}
    \label{fig:pred_tokamak}
\end{figure}

\begin{table}[htbp]
\centering
\caption[Reconstruction and Prediction Errors for the Tokamak System]
{Reconstruction and Prediction Errors on the Blind Tokamak Trajectory}
\label{tab:tokamak_errors}
\resizebox{0.8\textwidth}{!}{
\begin{tabular}{lcccc}
\toprule
\multirow{2}{*}{\textbf{Method}}
& \multicolumn{2}{c}{\textbf{Reconstruction (Stage 1)}}
& \multicolumn{2}{c}{\textbf{Prediction (Stage 2)}} \\
\cmidrule(lr){2-3} \cmidrule(lr){4-5}
& \textbf{Absolute Error}
& \textbf{Relative Error}
& \textbf{Absolute Error}
& \textbf{Relative Error} \\
\midrule

\textbf{SpAE (Ours)}
& $6.3670 \times 10^{\phantom{-}1}$
& $2.9447 \times 10^{-2}$
& $\mathbf{6.4538 \times 10^{\phantom{-}1}}$
& $\mathbf{2.9849 \times 10^{-2}}$ \\

WSAE
& $\mathbf{6.3046 \times 10^{\phantom{-}1}}$
& $\mathbf{2.9159 \times 10^{-2}}$
& $7.9430 \times 10^{\phantom{-}1}$
& $3.6736 \times 10^{-2}$ \\

COT
& $3.4350 \times 10^{\phantom{-}2}$
& $1.5887 \times 10^{-1}$
& $3.5467 \times 10^{\phantom{-}2}$
& $1.6404 \times 10^{-1}$ \\

cSVD
& $2.9548 \times 10^{\phantom{-}2}$
& $1.3666 \times 10^{-1}$
& $3.9321 \times 10^{\phantom{-}2}$
& $1.8186 \times 10^{-1}$ \\

NLP
& $2.4936 \times 10^{\phantom{-}2}$
& $1.1533 \times 10^{-1}$
& $3.8928 \times 10^{\phantom{-}2}$
& $1.8004 \times 10^{-1}$ \\

POD
& $2.0290 \times 10^{\phantom{-}2}$
& $9.3843 \times 10^{-2}$
& $4.0209 \times 10^{\phantom{-}2}$
& $1.8597 \times 10^{-1}$ \\

\bottomrule
\end{tabular}
}
\end{table}

\begin{figure}[htbp]
    \centering
    \includegraphics[width=0.8\linewidth]{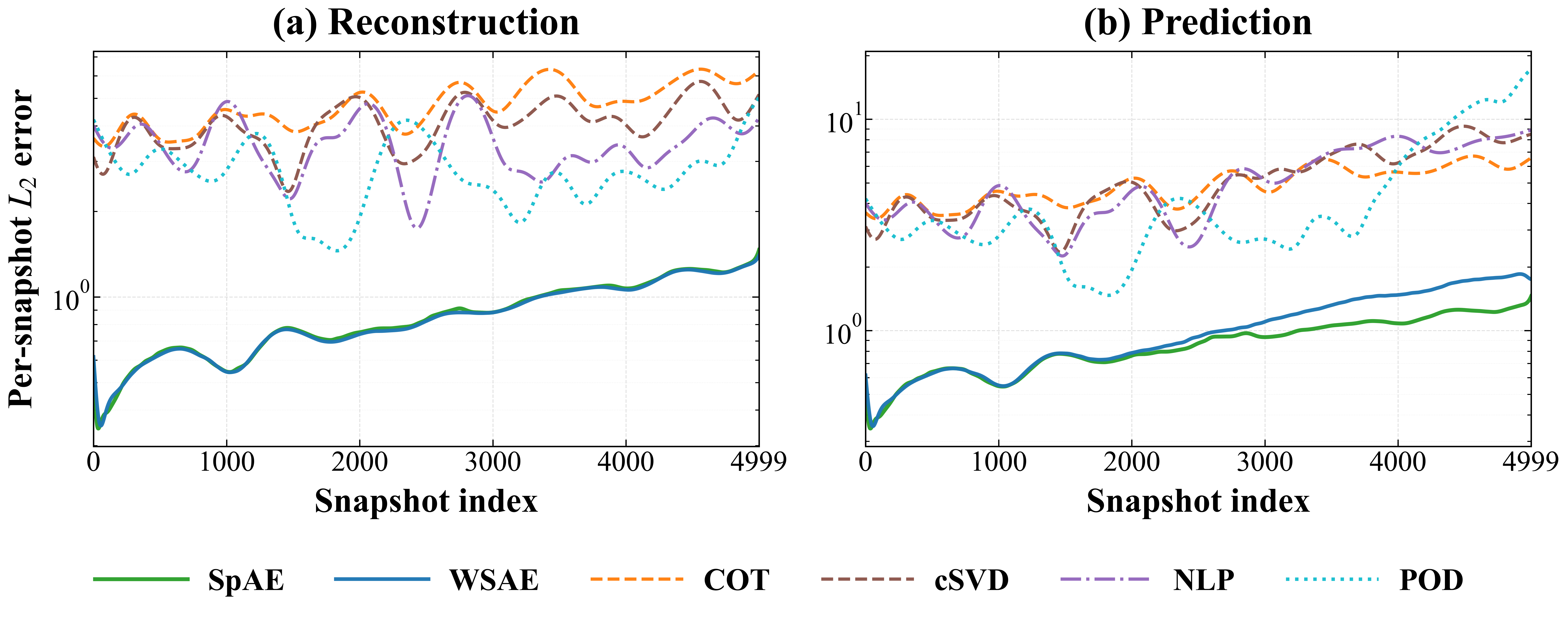}
    \caption{Per-snapshot $L_2$ errors on a logarithmic scale for the blind Tokamak trajectory. Panel (a) shows the reconstruction error, and panel (b) shows the full-prediction error.}
    \label{fig:Error_tokamak}
\end{figure}

We simulate 500 charged particles, giving a phase space in
$\mathbb{R}^{3000}$. Ground-truth trajectories are generated by the
second-order implicit midpoint method with $\Delta t=0.01$. Microscopic
perturbations of amplitude $\epsilon=10^{-2}$ produce 56 trajectories,
each containing 5000 snapshots, and the state is reduced to $2k=6$.
The split contains 52 training trajectories, three validation
trajectories, and one blind target trajectory for
final evaluation.

The reconstruction results in \Cref{tab:tokamak_errors} and the reconstruction panel of \cref{fig:Error_tokamak} indicate that the $\mathbb{R}^{3000}$ Tokamak states can be represented accurately in a six-dimensional latent space by the nonlinear reducers. SpAE and WSAE yield comparable errors in the $10^{-2}$ range. Although POD remains in the same order of magnitude, its error is clearly higher, while COT, cSVD, and NLP produce errors in the $10^{-1}$ range. This ordering confirms the strong reconstruction performance of the nonlinear models under substantial dimensionality reduction.

The prediction results in \Cref{tab:tokamak_errors} and the prediction panel of \cref{fig:Error_tokamak} show that SpAE yields the lowest long-horizon prediction error among all reduction methods. Its error is lower than that of WSAE within the $10^{-2}$ range, whereas the errors of all linear methods lie in the $10^{-1}$ range. In \cref{fig:pred_tokamak}, the trajectories predicted using SpAE agree closely with the ground-truth trajectories over 5,000 snapshots, indicating that SpAE provides an effective reduced representation for long-horizon prediction.

\subsection{Two-Stream Instability Model}

We consider a 1D periodic $N$-particle two-stream model with a Gaussian-screened reciprocal-space interaction~\cite{cheng1976integration,de1980simulation,escande2016n}. The Hamiltonian is
\begin{equation}
H(\mathbf{q},\mathbf{p})
=
\sum_{i=1}^{N}\frac{p_i^2}{2m}
+
\frac{e^2}{4\pi\varepsilon_0}
\sum_{\ell=1}^{M_{\mathrm F}}\exp\!\left(
-\frac{k_\ell^2}{4\alpha_{\mathrm{ew}}^2}
\right)
\left[
\left(\sum_{j=1}^{N}\cos(k_\ell q_j)\right)^2
+
\left(\sum_{j=1}^{N}\sin(k_\ell q_j)\right)^2
\right].
\end{equation}
Here, $M_{\mathrm F}$ denotes the number of retained positive Fourier modes, $k_\ell=2\pi\ell/L$ is the wave number of the $\ell$-th mode, and $\alpha_{\mathrm{ew}}>0$ controls the exponential attenuation of high-wave-number modes.

\begin{figure}[htbp]
    \centering
\includegraphics[width=1\linewidth]{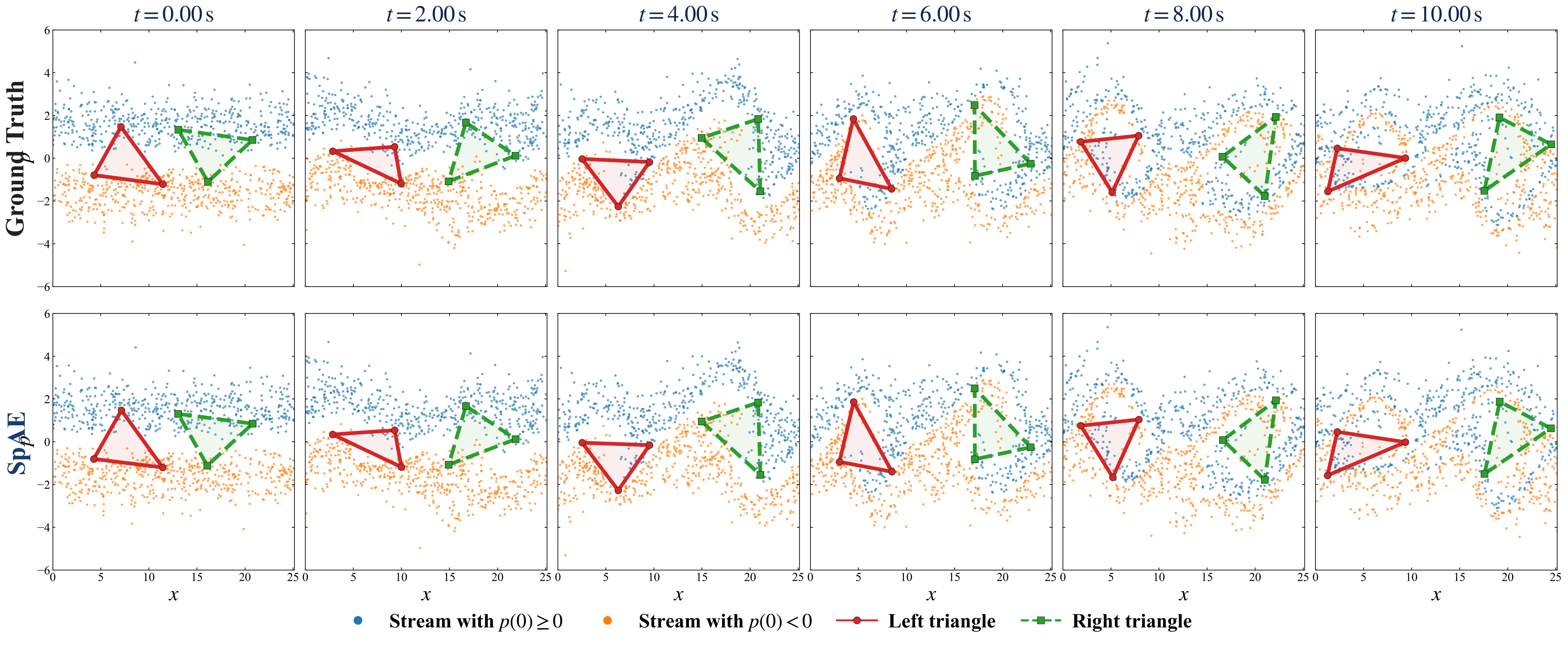}
    \caption{Phase-space evolution of the two-stream instability at $t=0.00$, $2.00$, $4.00$, $6.00$, $8.00$, and $10.00\,\mathrm{s}$. The top and bottom rows show the ground truth and SpAE predictions, respectively. Blue and orange points denote particles with $p(0)\geq 0$ and $p(0)<0$, respectively. The solid red triangle connects particles 49, 196, and 936, while the dashed green triangle connects particles 222, 307, and 589.}
    \label{fig:pred_twostream}
\end{figure}

\begin{table}[htbp]
\centering
\caption{Reconstruction and prediction errors on the blind two-stream trajectory.}
\label{tab:twostream_errors}
\resizebox{0.8\textwidth}{!}{
\begin{tabular}{lcccc}
\toprule
\multirow{2}{*}{\textbf{Method}}
& \multicolumn{2}{c}{\textbf{Reconstruction (Stage 1)}}
& \multicolumn{2}{c}{\textbf{Prediction (Stage 2)}} \\
\cmidrule(lr){2-3} \cmidrule(lr){4-5}
& \textbf{Absolute Error}
& \textbf{Relative Error}
& \textbf{Absolute Error}
& \textbf{Relative Error} \\
\midrule
\textbf{SpAE (Ours)}
& $\mathbf{2.1175 \times 10^{\phantom{-}1}}$
& $\mathbf{1.2431 \times 10^{-3}}$
& $\mathbf{2.2240 \times 10^{\phantom{-}1}}$
& $\mathbf{1.3056 \times 10^{-3}}$ \\

WSAE
& $3.3703 \times 10^{\phantom{-}1}$
& $1.9785 \times 10^{-3}$
& $3.5844 \times 10^{\phantom{-}1}$
& $2.1042 \times 10^{-3}$ \\

COT
& $4.3174 \times 10^{\phantom{-}2}$
& $2.5345 \times 10^{-2}$
& $4.6770 \times 10^{\phantom{-}2}$
& $2.7457 \times 10^{-2}$ \\

cSVD
& $2.9532 \times 10^{\phantom{-}2}$
& $1.7337 \times 10^{-2}$
& $3.1845 \times 10^{\phantom{-}2}$
& $1.8695 \times 10^{-2}$ \\

NLP
& $2.2485 \times 10^{\phantom{-}2}$
& $1.3200 \times 10^{-2}$
& $3.6634 \times 10^{\phantom{-}2}$
& $2.1507 \times 10^{-2}$ \\

POD
& $1.1724 \times 10^{\phantom{-}2}$
& $6.8824 \times 10^{-3}$
& $4.6665 \times 10^{\phantom{-}3}$
& $2.7395 \times 10^{-1}$ \\
\bottomrule
\end{tabular}
}
\end{table}

\begin{figure}[htbp]
    \centering
    \includegraphics[width=0.8\linewidth]{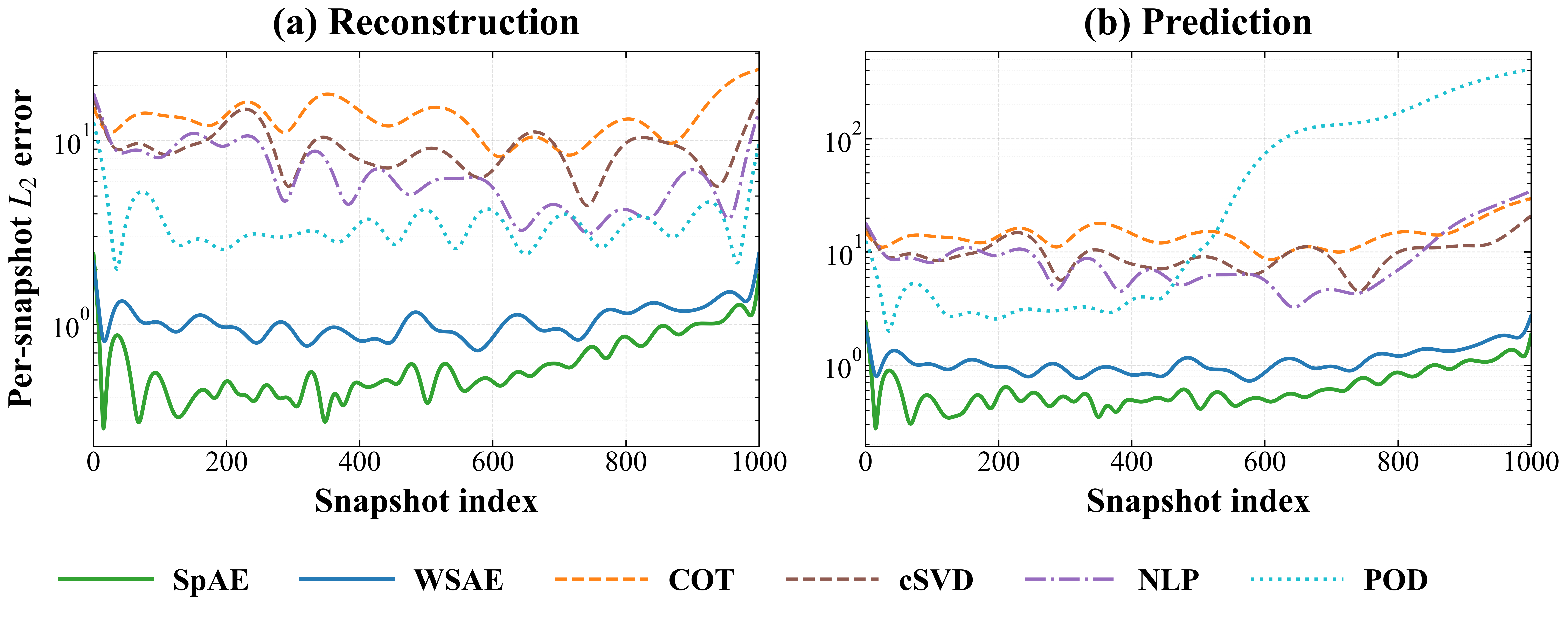}
    \caption{Per-snapshot $L_2$ errors on a logarithmic scale for the blind two-stream trajectory. Panel (a) shows the reconstruction error, and panel (b) shows the full-prediction error.}
    \label{fig:Error_twostream}
\end{figure}

We simulate an ensemble of $N=1000$ particles for the two-stream
instability, yielding a phase space in $\mathbb{R}^{2000}$. The
ground-truth trajectories are generated using the second-order symplectic
velocity Verlet scheme with $\Delta t=0.01$. Each simulation performs
1000 updates and stores both endpoints, resulting in 1001 snapshots
indexed from 0 to 1000. The state is reduced to $2k=10$. Independent Gaussian perturbations with amplitude
$\epsilon=10^{-3}$ generate 49 trajectories, of which 40 are used for
training, eight for validation, and one as the blind target trajectory.
The resulting target state matrix has shape $1001\times2000$.

The reconstruction results in \Cref{tab:twostream_errors} and the reconstruction panel of \cref{fig:Error_twostream} show that SpAE provides the most accurate reduction among the tested methods. Its error is lower than those of WSAE and POD within the $10^{-3}$ range and is more than one order of magnitude lower than those of COT, cSVD, and NLP. Moreover, the SpAE curve remains the lowest over nearly the entire blind trajectory, demonstrating the accuracy of its reduced representation.

The prediction results in \Cref{tab:twostream_errors} and the prediction panel of \cref{fig:Error_twostream} show that the latent-dynamics model trained on the SpAE representation yields the lowest long-horizon prediction error. Its error is lower than that obtained with WSAE and more than one order of magnitude lower than those obtained with all linear baselines. In \cref{fig:pred_twostream}, the phase-space vortices predicted using the SpAE representation closely match the ground truth in their formation, rotation, and deformation across all six times.

\section{Conclusion}\label{sec:conclusion}
We proposed symplecticity-preserving autoencoder (SpAE) for nonlinear model reduction of Hamiltonian systems. Its foundation is a universal approximation theory for symplectic embeddings. This reduces the approximation of a structured, manifold-valued map to the unconstrained approximation of scalar potentials, and yields an architecture that is fully expressive on nonlinear symplectic manifolds, exactly symplectic for every choice of weights, and trainable by standard unconstrained optimization.
These properties carry directly into the dynamics. The latent states evolve as a genuine Hamiltonian system. Extensive experiments on lattice and particle systems demonstrate that SpAE reduces reconstruction and prediction errors compared with existing symplectic reduction methods. The comparison with POD further highlights that accurate reconstruction alone does not guarantee stable Hamiltonian prediction.

More broadly, this work shows that geometric constraints can be incorporated directly into autoencoder architectures without sacrificing nonlinear expressivity or standard unconstrained training. This perspective goes beyond the symplectic setting and suggests new directions for learning reduced representations of physical systems with intrinsic geometric structures, including Poisson, contact, volume-preserving, metriplectic, and dissipative dynamics.
It will also be important to obtain sharper quantitative bounds, understand the interaction between representation learning and long-time integration, and combine structure-preserving architectures with geometric numerical methods. One promising direction is to extend the inverse modified differential equation \cite{zhu2024implementation,zhu2024error} to latent dynamics, thereby providing a unified tool for analyzing how learned representations and numerical integration jointly affect long-time behavior. Such developments may facilitate the construction of reliable learned surrogates for large-scale scientific computing and scientific discovery, where long-term stability and physical consistency are essential.

\section*{Data availability}
All the codes for data generation, network training, and visualization are publicly available on the GitHub repository: \url{https://github.com/LiyiFeng-Pro/SpAE}.

\section*{Funding}
L.F. and R.Z. are supported by the National Natural Science Foundation of China (No. NSFC-12271025).
Y.T. and Y.X. are supported by the National Natural Science Foundation of China (Grant No. 92470119).
A.Z. is supported by the National Research Foundation, Singapore, under its AI Singapore Programme (AISG Award No.: AISG3-RP-2022-028) and the National Research Foundation, Singapore, under the NRF fellowship (Project No. NRF-NRFF13-2021-0005).

\section*{Acknowledgments}
During the preparation of this work, the authors used large language models (LLMs) to assist with the code writing and to polish the written text. The authors thoroughly reviewed and edited the content and take full responsibility for all content.

\bibliographystyle{siamplain}
\bibliography{ref}

\begin{thebibliography}{10}

\bibitem{abid2019contrastive}
{\sc A.~Abid and J.~Zou}, {\em Contrastive variational autoencoder enhances
  salient features}, arXiv preprint arXiv:1902.04601,  (2019).

\bibitem{archilla2023spectral}
{\sc J.~F. Archilla and J.~Baj{\=a}rs}, {\em Spectral properties of exact
  polarobreathers in semiclassical systems}, Axioms, 12 (2023), p.~437.

\bibitem{archilla2019pterobreathers}
{\sc J.~F. Archilla, Y.~Doi, and M.~Kimura}, {\em Pterobreathers in a model for
  a layered crystal with realistic potentials: Exact moving breathers in a
  moving frame}, Physical Review E, 100 (2019), p.~022206.

\bibitem{Arnold1989}
{\sc V.~I. Arnol'd}, {\em Mathematical Methods of Classical Mechanics}, vol.~60
  of Graduate Texts in Mathematics, Springer, New York, 2nd ed.~ed., 1989,
  \url{https://doi.org/10.1007/978-1-4757-2063-1}.

\bibitem{bajars2023dimensionality}
{\sc J.~Baj{\=a}rs}, {\em Dimensionality reduction with proper symplectic
  decomposition for learning hamiltonian dynamics}, in International Conference
  on Numerical Computations: Theory and Algorithms, Springer, 2023, pp.~3--18.

\bibitem{bajars2025structure}
{\sc J.~Baj{\=a}rs and D.~Kalv{\=a}ns}, {\em Structure-preserving
  dimensionality reduction for learning hamiltonian dynamics}, Journal of
  Computational Physics, 528 (2025), p.~113832.

\bibitem{bajars2022data}
{\sc J.~Baj{\=a}rs and F.~Kozirevs}, {\em Data-driven intrinsic localized mode
  detection and classification in one-dimensional crystal lattice model},
  Physics Letters A, 436 (2022), p.~128071.

\bibitem{bendokat2021real}
{\sc T.~Bendokat and R.~Zimmermann}, {\em The real symplectic stiefel and
  grassmann manifolds: metrics, geodesics and applications}, arXiv preprint
  arXiv:2108.12447,  (2021).

\bibitem{blickhan2024registration}
{\sc T.~Blickhan}, {\em A registration method for reduced basis problems using
  linear optimal transport}, SIAM Journal on Scientific Computing, 46 (2024),
  pp.~A3177--A3204.

\bibitem{brantner2023symplectic}
{\sc B.~Brantner and M.~Kraus}, {\em Symplectic autoencoders for model
  reduction of hamiltonian systems}, arXiv preprint arXiv:2312.10004,  (2023).

\bibitem{braun1998nonlinear}
{\sc O.~M. Braun and Y.~S. Kivshar}, {\em Nonlinear dynamics of the
  frenkel--kontorova model}, Physics Reports, 306 (1998), pp.~1--108.

\bibitem{buchfink2023symplectic}
{\sc P.~Buchfink, S.~Glas, and B.~Haasdonk}, {\em Symplectic model reduction of
  hamiltonian systems on nonlinear manifolds and approximation with weakly
  symplectic autoencoder}, SIAM Journal on Scientific Computing, 45 (2023),
  pp.~A289--A311.

\bibitem{channell1990symplectic}
{\sc P.~J. Channell and C.~Scovel}, {\em Symplectic integration of hamiltonian
  systems}, Nonlinearity, 3 (1990), pp.~231--259.

\bibitem{chen2024constructing}
{\sc X.~Chen, B.~W. Soh, Z.-E. Ooi, E.~Vissol-Gaudin, H.~Yu, K.~S. Novoselov,
  K.~Hippalgaonkar, and Q.~Li}, {\em Constructing custom thermodynamics using
  deep learning}, Nature Computational Science, 4 (2024), pp.~66--85.

\bibitem{chen2025reduced}
{\sc Y.~Chen, W.~Guo, Q.~Tang, and X.~Zhong}, {\em Reduced-order modeling of
  hamiltonian dynamics based on symplectic neural networks}, arXiv preprint
  arXiv:2508.11911,  (2025).

\bibitem{cheng1976integration}
{\sc C.-Z. Cheng and G.~Knorr}, {\em The integration of the vlasov equation in
  configuration space}, Journal of Computational Physics, 22 (1976),
  pp.~330--351.

\bibitem{cristofaro2018symplectic}
{\sc D.~Cristofaro-Gardiner and R.~Hind}, {\em Symplectic embeddings of
  products}, Commentarii Mathematici Helvetici, 93 (2018), pp.~1--32.

\bibitem{cybenko1989approximation}
{\sc G.~Cybenko}, {\em Approximation by superpositions of a sigmoidal
  function}, Mathematics of control, signals and systems, 2 (1989),
  pp.~303--314.

\bibitem{da2008lectures}
{\sc A.~C. Da~Silva}, {\em Lectures on Symplectic Geometry}, vol.~3575,
  Springer, Berlin, 2008.

\bibitem{de1980simulation}
{\sc S.~W. de~Leeuw, J.~W. Perram, and E.~R. Smith}, {\em Simulation of
  electrostatic systems in periodic boundary conditions. i. lattice sums and
  dielectric constants}, Proceedings of the Royal Society of London. Series A,
  Mathematical and Physical Sciences,  (1980), pp.~27--56.

\bibitem{dou2010breathers}
{\sc Q.~Dou, J.~Cuevas, J.~C. Eilbeck, and F.~M. Russell}, {\em Breathers and
  kinks in a simulated crystal experiment}, arXiv preprint arXiv:1008.0278,
  (2010).

\bibitem{escande2016n}
{\sc D.~Escande, F.~Doveil, and Y.~Elskens}, {\em N-body description of debye
  shielding and landau damping}, Plasma Physics and Controlled Fusion, 58
  (2016), p.~014040.

\bibitem{feng1985proceedings}
{\sc K.~Feng}, {\em Proceedings of the 1984 Beijing Symposium on Differential
  Geometry and Differential Equations: Computation of Partial Differential
  Equations}, Science Press, 1985.

\bibitem{feng1986difference}
{\sc K.~Feng}, {\em Difference schemes for hamiltonian formalism and symplectic
  geometry}, Journal of Computational mathematics, 4 (1986), pp.~279--289.

\bibitem{feng1995collected}
{\sc K.~Feng}, {\em Collected Works of Feng Kang: II}, 1995.

\bibitem{feng2010symplectic}
{\sc K.~Feng and M.~Qin}, {\em Symplectic geometric algorithms for Hamiltonian
  systems}, vol.~449, Springer, 2010.

\bibitem{forest1990fourth}
{\sc E.~Forest and R.~D. Ruth}, {\em Fourth-order symplectic integration},
  Physica D: Nonlinear Phenomena, 43 (1990), pp.~105--117.

\bibitem{greydanus2019hamiltonian}
{\sc S.~Greydanus, M.~Dzamba, and J.~Yosinski}, {\em Hamiltonian neural
  networks}, Advances in neural information processing systems, 32 (2019).

\bibitem{hadsell2006dimensionality}
{\sc R.~Hadsell, S.~Chopra, and Y.~LeCun}, {\em Dimensionality reduction by
  learning an invariant mapping}, in 2006 IEEE computer society conference on
  computer vision and pattern recognition (CVPR'06), vol.~2, IEEE, 2006,
  pp.~1735--1742.

\bibitem{hairer2006geometric}
{\sc E.~Hairer, C.~Lubich, and G.~Wanner}, {\em Geometric numerical
  integration: structure-preserving algorithms for ordinary differential
  equations}, vol.~31, Springer Science \& Business Media, 2006.

\bibitem{hawke2024contrastive}
{\sc S.~Hawke, Y.~Ma, and D.~Li}, {\em Contrastive dimension reduction: when
  and how?}, Advances in Neural Information Processing Systems, 37 (2024),
  pp.~74034--74057.

\bibitem{hinton2006reducing}
{\sc G.~E. Hinton and R.~R. Salakhutdinov}, {\em Reducing the dimensionality of
  data with neural networks}, science, 313 (2006), pp.~504--507.

\bibitem{hornik1990universal}
{\sc K.~Hornik, M.~Stinchcombe, and H.~White}, {\em Universal approximation of
  an unknown mapping and its derivatives using multilayer feedforward
  networks}, Neural Networks, 3 (1990), pp.~551 -- 560.

\bibitem{husemoller1966fibre}
{\sc D.~Husem{\"o}ller}, {\em Fibre Bundles}, McGraw-Hill Series in Higher
  Mathematics, McGraw-Hill, New York, 1966.

\bibitem{jin2022optimal}
{\sc P.~Jin, Z.~Lin, and B.~Xiao}, {\em Optimal unit triangular factorization
  of symplectic matrices}, Linear Algebra and its Applications, 650 (2022),
  pp.~236--247.

\bibitem{jin2020unit}
{\sc P.~Jin, Y.~Tang, and A.~Zhu}, {\em Unit triangular factorization of the
  matrix symplectic group}, SIAM Journal on Matrix Analysis and Applications,
  41 (2020), pp.~1630--1650.

\bibitem{Jin2020SympNets}
{\sc P.~Jin, Z.~Zhang, A.~Zhu, Y.~Tang, and G.~E. Karniadakis}, {\em Sympnets:
  Intrinsic structure-preserving symplectic networks for identifying
  hamiltonian systems}, Neural Networks, 132 (2020), pp.~166--179.

\bibitem{kunisch2002galerkin}
{\sc K.~Kunisch and S.~Volkwein}, {\em Galerkin proper orthogonal decomposition
  methods for a general equation in fluid dynamics}, SIAM Journal on Numerical
  analysis, 40 (2002), pp.~492--515.

\bibitem{lee2020model}
{\sc K.~Lee and K.~T. Carlberg}, {\em Model reduction of dynamical systems on
  nonlinear manifolds using deep convolutional autoencoders}, Journal of
  Computational Physics, 404 (2020), p.~108973.

\bibitem{libermann1987symplectic}
{\sc P.~Libermann and C.-M. Marle}, {\em Symplectic Geometry and Analytical
  Mechanics}, vol.~35 of Mathematics and Its Applications, D. Reidel Publishing
  Company, Dordrecht, 1987.
\newblock Translated from the French by Bertram Eugene Schwarzbach.

\bibitem{maaten2008visualizing}
{\sc L.~v.~d. Maaten and G.~Hinton}, {\em Visualizing data using t-sne},
  Journal of machine learning research, 9 (2008), pp.~2579--2605.

\bibitem{mcinnes2018umap}
{\sc L.~McInnes, J.~Healy, and J.~Melville}, {\em Umap: Uniform manifold
  approximation and projection for dimension reduction}, arXiv preprint
  arXiv:1802.03426,  (2018).

\bibitem{peng2016structure}
{\sc L.~Peng and K.~Mohseni}, {\em Structure-preserving model reduction of
  forced hamiltonian systems}, arXiv preprint arXiv:1603.03514,  (2016).

\bibitem{peng2016symplectic}
{\sc L.~Peng and K.~Mohseni}, {\em Symplectic model reduction of hamiltonian
  systems}, SIAM Journal on Scientific Computing, 38 (2016), pp.~A1--A27.

\bibitem{preechakul2022diffusion}
{\sc K.~Preechakul, N.~Chatthee, S.~Wizadwongsa, and S.~Suwajanakorn}, {\em
  Diffusion autoencoders: Toward a meaningful and decodable representation}, in
  Proceedings of the IEEE/CVF conference on computer vision and pattern
  recognition, 2022, pp.~10619--10629.

\bibitem{reich1999backward}
{\sc S.~Reich}, {\em Backward error analysis for numerical integrators}, SIAM
  Journal on Numerical Analysis, 36 (1999), pp.~1549--1570.

\bibitem{sanz1988runge}
{\sc J.~M. Sanz-Serna}, {\em Runge-kutta schemes for hamiltonian systems}, BIT
  Numerical Mathematics, 28 (1988), pp.~877--883.

\bibitem{sanz2018numerical}
{\sc J.-M. Sanz-Serna and M.-P. Calvo}, {\em Numerical hamiltonian problems},
  vol.~7, Courier Dover Publications, 2018.

\bibitem{sharma2023symplectic}
{\sc H.~Sharma, H.~Mu, P.~Buchfink, R.~Geelen, S.~Glas, and B.~Kramer}, {\em
  Symplectic model reduction of hamiltonian systems using data-driven quadratic
  manifolds}, Computer Methods in Applied Mechanics and Engineering, 417
  (2023), p.~116402.

\bibitem{shen2021neural}
{\sc Z.~Shen, H.~Yang, and S.~Zhang}, {\em Neural network approximation: Three
  hidden layers are enough}, Neural Networks, 141 (2021), pp.~160--173.

\bibitem{tao2016explicit}
{\sc M.~Tao}, {\em Explicit high-order symplectic integrators for charged
  particles in general electromagnetic fields}, Journal of Computational
  Physics, 327 (2016), pp.~245--251.

\bibitem{turaev2003polynomial}
{\sc D.~Turaev}, {\em Polynomial approximations of symplectic dynamics and
  richness of chaos in non-hyperbolic area-preserving maps}, Nonlinearity, 16
  (2003), pp.~123--135.

\bibitem{vincent2008extracting}
{\sc P.~Vincent, H.~Larochelle, Y.~Bengio, and P.-A. Manzagol}, {\em Extracting
  and composing robust features with denoising autoencoders}, in Proceedings of
  the 25th international conference on Machine learning, 2008, pp.~1096--1103.

\bibitem{wang2016auto}
{\sc Y.~Wang, H.~Yao, and S.~Zhao}, {\em Auto-encoder based dimensionality
  reduction}, Neurocomputing, 184 (2016), pp.~232--242.

\bibitem{wu2024factorized}
{\sc A.~Wu and W.-S. Zheng}, {\em Factorized diffusion autoencoder for
  unsupervised disentangled representation learning}, in Proceedings of the
  AAAI Conference on Artificial Intelligence, vol.~38, 2024, pp.~5930--5939.

\bibitem{yildiz2025symplectic}
{\sc S.~Yildiz, K.~Janik, and P.~Benner}, {\em Symplectic convolutional neural
  networks}, arXiv preprint arXiv:2508.19842,  (2025).

\bibitem{zhang2016explicit}
{\sc R.~Zhang, H.~Qin, Y.~Tang, J.~Liu, Y.~He, and J.~Xiao}, {\em Explicit
  symplectic algorithms based on generating functions for charged particle
  dynamics}, Physical Review E, 94 (2016), p.~013205.

\bibitem{zhu2024implementation}
{\sc A.~Zhu, T.~Bertalan, B.~Zhu, Y.~Tang, and I.~G. Kevrekidis}, {\em
  Implementation and (inverse modified) error analysis for implicitly templated
  ode-nets}, SIAM Journal on Applied Dynamical Systems, 23 (2024),
  pp.~2643--2669.

\bibitem{zhu2022approximation}
{\sc A.~Zhu, P.~Jin, and Y.~Tang}, {\em Approximation capabilities of
  measure-preserving neural networks}, Neural Networks, 147 (2022), pp.~72--80.

\bibitem{zhu2025continuity}
{\sc A.~Zhu, Y.~Pan, and Q.~Li}, {\em Continuity-preserving convolutional
  autoencoders for learning continuous latent dynamical models from images}, in
  The Thirteenth International Conference on Learning Representations, 2025.

\bibitem{zhu2025identifiable}
{\sc A.~Zhu, B.~W. Soh, G.~A. Pavliotis, and Q.~Li}, {\em Identifiable learning
  of dissipative dynamics}, Nature Communications,  (2026).

\bibitem{zhu2024error}
{\sc A.~Zhu, S.~Wu, and Y.~Tang}, {\em Error analysis based on inverse modified
  differential equations for discovery of dynamics using linear multistep
  methods and deep learning}, SIAM Journal on Numerical Analysis, 62 (2024),
  pp.~2087--2120.

\end{thebibliography}

\end{document}